\documentclass[11pt,letterpaper]{mystyle}

\usepackage[utf8]{inputenc}
\usepackage[T1]{fontenc}
\usepackage{xcolor}
\definecolor{cotgray}{RGB}{244,246,248}
\definecolor{cotline}{RGB}{90,99,110}
\definecolor{absblue}{RGB}{46,91,186}
\definecolor{absbluebg}{RGB}{235,242,255}
\definecolor{presgreen}{RGB}{22,130,92}
\definecolor{presgreenbg}{RGB}{233,247,241}
\definecolor{warnorange}{RGB}{193,123,27}
\definecolor{warnorangebg}{RGB}{255,246,229}
\newcommand{\cmark}{\ding{51}}
\newcommand{\xmark}{\ding{55}}

\title{CXR-ContraBench: Benchmarking Negated-Option Attraction in Medical VLMs}
\runningtitle{CXR-ContraBench}

\author{Zhengru Fang et al.}

\newcommand{\paperauthors}{%
Zhengru Fang$^{1}$,
Yanan Ma$^{1}$,
Yu Guo$^{1}$,
Senkang Hu$^{1}$,
Yixian Zhang$^{2}$,
Hangcheng Cao$^{1}$,
Wenbo Ding$^{2}$,
Yuguang Fang$^{1}$\\[0.3em]
\normalfont
$^1$ City University of Hong Kong,
$^2$ Tsinghua Shenzhen International Graduate School%
}

\makeatletter
\renewcommand{\maketitle}{%
\vphantom{a}
\vspace{0mm}
\noindent
\begin{minipage}[t]{0.5\textwidth}
    \includegraphics[height=1.5cm]{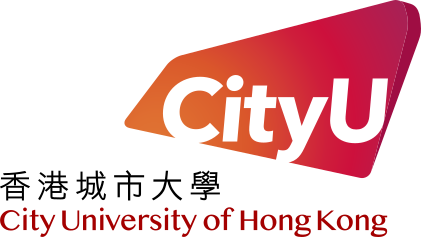}
\end{minipage}%
\hfill
\begin{minipage}[t]{0.5\textwidth}
    \raggedleft
\end{minipage}

\vspace{-5mm}
\noindent\rule{\textwidth}{0.4pt}
\vspace{-10pt}
\vspace{-10pt}

\begin{tcolorbox}[titlebox]
{\raggedright \titlefont \textbf{\@title}\par}
\vspace{4mm}
{\raggedright \paperauthors\par}
\vspace{4mm}
\abscontent
\end{tcolorbox}
\suppressfloats[t]
}
\makeatother

\newcommand{\method}{QCCV-Neg}

\newtcolorbox{promptbox}[1]{%
  enhanced,
  breakable,
  colback=black!2,
  colframe=black!20,
  boxrule=0.5pt,
  arc=2pt,
  left=6pt,
  right=6pt,
  top=5pt,
  bottom=5pt,
  title={#1},
  fonttitle=\bfseries\small,
  coltitle=black,
}

\begin{document}
\begin{abstract}
When a chest X-ray shows consolidation but the question asks which finding is present, a medical vision-language model may answer ``No consolidation''. This is more than an incorrect choice: it is a polarity reversal that emits a clinical statement contradicting the image. We study this failure as \emph{negated-option attraction}, where a model is drawn to a negated answer option even when it conflicts with both the visual evidence and the question. We introduce \textbf{CXR-ContraBench} (Chest X-Ray Contradiction Benchmark), a diagnostic benchmark spanning internal ReXVQA slices and external OpenI and CheXpert protocols. The benchmark centers on present-finding questions, where selecting ``No $X$'' despite visible $X$ creates the main clinical risk, and uses absent-finding questions as secondary tests of whether models copy negated wording. Across CheXpert protocols, the failure is substantial and persistent. On a strict direct presence probe, MedGemma and Qwen2.5-VL reach only 31.49\% and 30.21\% accuracy, respectively; on a matched 135{,}754-record CheXpert training-split protocol, both models select negated options on over 62\% of presence questions. Chain-of-thought prompting reduces some presence-side reversals but does not eliminate them and can amplify absence-side contradictions. Finally, \textbf{\method} (Question-Conditioned Consistency Verifier for Negation) deterministically repairs the measured polarity-confused subset without retraining, raising MedGemma and Qwen2.5-VL to 96.60\% and 95.32\% accuracy on the direct presence probe. These results show that standard accuracy can hide a clinically meaningful inference-time polarity failure. Source code and benchmark construction scripts are available at \url{https://github.com/fangzr/cxr-contrabench-code}.
\end{abstract}

\maketitle
\pagestyle{headstyle}
\thispagestyle{empty}

\section{Introduction}
\label{sec:intro}


Vision-language models (VLMs) have made rapid progress in image understanding and instruction following, as reflected by broad multimodal evaluations~\citep{mmbench,mmmu,mmvet,seedbench}. Medical VLMs (MVLMs) have also improved on radiology interpretation and report generation across domain-specific benchmarks~\citep{pmcvqa,cares,gmaimmbench,benchx,rexvqa,multimedeval,medtrinity,rexgradient}. Yet aggregate accuracy can still mask narrow but clinically meaningful failures. In radiology, this risk is especially acute: a single polarity error can invert the meaning of a finding and produce a misleading clinical statement.

\begin{figure}[t]
  \centering
  \includegraphics[width=\linewidth]{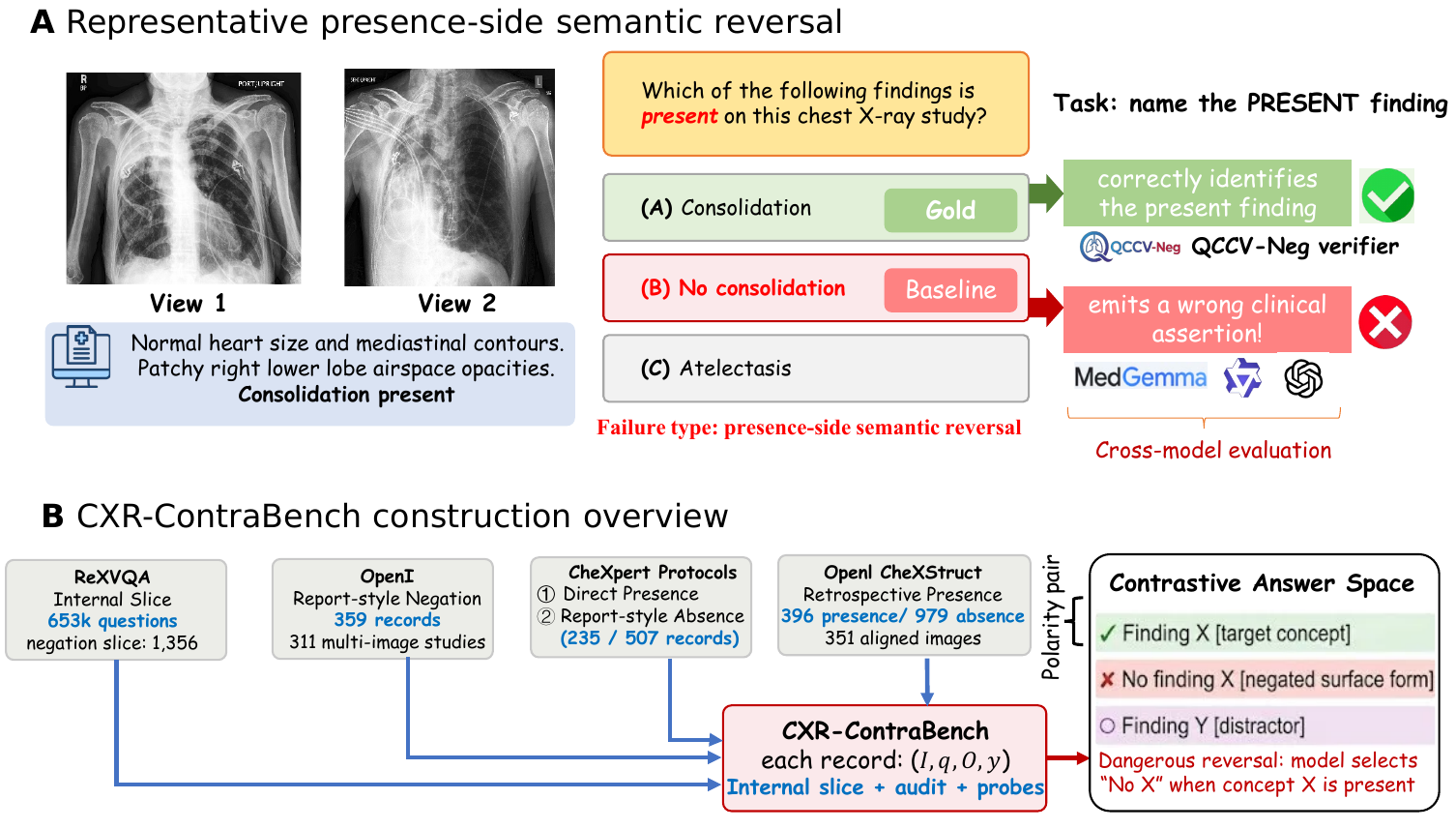}
  \caption{\textbf{Overview of CXR-ContraBench.}
  \textbf{(a)} A representative presence-side semantic reversal. The task asks for the present finding, but a polarity-confused baseline selects the negated option and emits the opposite clinical statement.
  \textbf{(b)} Construction of CXR-ContraBench from internal ReXVQA slices, OpenI report-derived records, OpenI CheXStruct retrospective presence/absence audits, and direct OpenI/CheXpert protocols. The benchmark centers on present-finding questions, where selecting ``No $X$'' when $X$ is visible is the main clinical risk, and also includes absent-finding questions to test whether models copy negated wording instead of the target concept.}
  \label{fig:overview}
\end{figure}

Motivated by the above challenge, we study one such failure mode: \emph{negated-option attraction}.
Its most dangerous form is a presence-side semantic reversal. As illustrated in Figure~\ref{fig:overview}a, while the evaluated MVLMs can often distinguish findings such as consolidation, pleural effusion, or pneumothorax when presented with only positive options, their performance drops sharply once the answer set also includes a negated counterpart, such as ``no consolidation,'' ``no pleural effusion,'' or ``no pneumothorax.'' Instead of selecting the present finding, several strong backbones are drawn to the negated surface form, emitting a clinical statement that directly contradicts the visual evidence.
Moreover, we find that chain-of-thought (CoT)\citep{wei2022chain} prompting fails to reliably resolve this failure. While it reduces the frequency of dangerous reversals on the direct CheXpert\citep{chexpert} presence probe, it does not eliminate them.
Critically, the failure is not a validation-set artifact: on a matched 135{,}754-record CheXpert training protocol, both MedGemma\citep{sellergren2025medgemma} and Qwen2.5-VL\citep{bai2025qwen25vltechnicalreport} emit negated-option selections on over 62\% of presence questions, confirming that the phenomenon is systematic rather than incidental.

While recent medical VQA evaluations broaden coverage, scale, or trustworthiness analysis~\citep{vqarad,slake,pmcvqa,cares,gmaimmbench,benchx,rexvqa}, they do not directly expose this behavior or make negated-option attraction a dedicated benchmark target. NegBench~\citep{negbench} is the closest prior work: it evaluates negation understanding across image, video, and medical datasets, including CheXpert, and shows that CLIP-like joint-embedding models\citep{hafner2021clip} can collapse affirmative and negated statements in embedding space. Winoground~\citep{winoground} further exposes compositional brittleness in vision-language models. CXR-ContraBench differs in scope and failure target. Rather than testing broad negation understanding, we isolate an inference-time answer-space polarity failure in generative MVLMs, where a model may select a negated counterpart even when the question asks for a present chest-X-ray finding. This presence-side semantic reversal is the primary clinical risk we target. What remains missing is a CXR-specific benchmark that makes such polarity-confused predictions auditable across multiple findings, external protocols, scale replication, CoT prompting, and deterministic post-hoc repair.

To fill this gap, we propose \textbf{CXR-ContraBench} (Chest X-Ray Contradiction Benchmark), which, as illustrated in Figure \ref{fig:overview}b, is organized around three protocol families: internal ReXVQA\citep{rexvqa} slices (1k/5k/10k/pooled 3$\times$10k); an OpenI CheXStruct retrospective audit with a 396-example presence subset and a 979-example absence subset; and dedicated external OpenI and CheXpert protocols, including a 235-record direct CheXpert presence probe. Across these protocols, each answer set pairs a finding with its negated form and at least one positive distractor, making polarity-confused predictions directly countable. This controlled design follows the diagnostic logic of prior benchmarks for hallucination~\citep{pope} and compositionality~\citep{winoground}, enabling clean attribution rather than confounding the failure with prompt or format variability.

Our results highlight three contributions. First, we identify negated-option attraction as a systematic inference-time reliability failure in medical VLMs, with presence-side semantic reversal as the primary clinical risk. On the OpenI retrospective audit, MedGemma presence accuracy rises from 29.55\% to 79.29\% after deterministically remapping 197 repairable reversals; on the direct CheXpert presence probe, \method{} (Question-Conditioned Consistency Verifier for Negation) repairs all 153 measured reversals for MedGemma, raising accuracy from 31.49\% to 96.60\%. Second, we introduce CXR-ContraBench as a diagnostic benchmark that makes this failure measurable across retrospective audits, direct probes, controlled question/option rewordings, and a 135{,}754-record scale replication. Third, we show that CoT prompting does not reliably resolve the failure, reducing but not removing dangerous reversals on the presence probe while amplifying contradictions on several absence-side protocols, whereas deterministic verification repairs the measurable subset without retraining. 

\section{Related Work}
\label{sec:related}

Medical VQA and chest-X-ray benchmarks such as VQA-RAD~\citep{vqarad}, SLAKE~\citep{slake}, PMC-VQA~\citep{pmcvqa}, CARES~\citep{cares}, GMAI-MMBench~\citep{gmaimmbench}, BenchX~\citep{benchx}, ReXVQA~\citep{rexvqa}, MultiMedEval~\citep{multimedeval}, MedTrinity-25M~\citep{medtrinity}, and CXReasonBench~\citep{cxreasonbench} expand scale, modality coverage, or trustworthiness analysis. Where these benchmarks pursue broad medical VQA coverage, CXR-ContraBench instead 
targets a single auditable failure: negated-option attraction in generative medical 
VLMs that selects a negated option contradicting the image. It also differs from NegBench~\citep{negbench} in both target model class and failure mechanism. NegBench primarily studies broad negation failures in CLIP-like or joint-embedding VLMs, including affirmation/negation collapse, and explores synthetic-data fine-tuning as a remedy. CXR-ContraBench characterizes inference-time negated-option attraction in generative VLMs under multiple-choice prompting and proposes deterministic post-hoc verification that requires no retraining. The two benchmarks are complementary because embedding-space collapse and answer-space polarity confusion can co-exist in the same model while requiring different interventions.

Diagnostic multimodal benchmarks further motivate targeted probes. Winoground~\citep{winoground}, NegBench~\citep{negbench}, MMBench~\citep{mmbench}, MMMU~\citep{mmmu}, MM-Vet~\citep{mmvet}, and SEED-Bench~\citep{seedbench} show that strong average capability can hide brittleness under controlled tests. The same lesson appears in faithfulness and safety evaluation, including POPE~\citep{pope}, TruthfulQA~\citep{truthfulqa}, HaluEval~\citep{halueval}, FaithDial~\citep{faithdial}, HELM~\citep{helm}, BIG-Bench Hard~\citep{bigbenchhard}, MMLU-Pro~\citep{mmlupro}, and DecodingTrust~\citep{decodingtrust}. CXR-ContraBench brings that diagnostic philosophy into a clinically grounded chest-X-ray setting and couples it with both retrospective and direct polarity probes. A key distinction from prior negation benchmarks lies in our treatment of chain-of-thought prompting. Reasoning is often assumed to improve reliability, but we show that CoT is not a reliable remedy and can substantially worsen errors in some model/protocol pairs. This CoT-induced amplification is a failure mode that has not been documented in prior medical VLM evaluation. Related decoding-time adaptation methods improve LLM task adaptation by modifying the generation distribution~\citep{hu2025distribution}; our focus is different, targeting answer-space polarity confusion in medical VLMs through benchmark construction and deterministic verification rather than distributional decoding adaptation.

Recent medical VLMs and report-oriented resources motivate the relevance of this failure mode but do not isolate it. LLaVA-Med~\citep{llavamed}, MedGemma~\citep{medgemma}, CheXagent~\citep{chexagent}, Med-Flamingo~\citep{medflamingo}, BiomedCLIP~\citep{biomedclip}, MedKLIP~\citep{medklip}, RadFM~\citep{radfm}, XrayGPT~\citep{xraygpt}, RaDialog~\citep{radialog}, and R2GenGPT~\citep{r2gengpt} improve biomedical vision-language capability, while ReXrank~\citep{rexrank}, CoCa-CXR~\citep{cocacxr}, CorBenchX~\citep{corbenchx}, M3D~\citep{m3d}, and DeepTumorVQA~\citep{are3dvlmready} broaden evaluation across report generation, error correction, or 3D diagnosis. Our contribution is complementary: we make both presence-side semantic reversal and absence-side contradiction measurable by construction and auditable at the level of individual answer choices.
Table~\ref{tab:benchmark-comparison} summarizes these distinctions.

\section{CXR-ContraBench}
\label{sec:benchmark}

\subsection{Task formalization}
\label{subsec:task}

CXR-ContraBench is designed to diagnose an inference-time failure of generative medical VLMs under multiple-choice prompting; its measurement targets the model's final selected option rather than the geometry of any joint embedding space. Each benchmark example is a tuple $s=(I,q,\mathcal{O},y)$ with study image set $I$, question $q$, answer set $\mathcal{O}$, and gold option $y$. We focus on contrastive answer spaces containing exactly one explicitly negated option $o^{-}$ (e.g., ``No edema'') and at least one positive finding concept; let $\hat y$ denote the model prediction. In the absence-side diagnostic setting, the question asks which finding concept is absent, and the gold answer is the finding name itself. Because the answer is defined as a concept rather than a report sentence, selecting the negated surface form $o^-$, e.g., ``No edema,'' is counted as a contradiction. In retrospective presence audits such as OpenI CheXStruct, we further distinguish all \emph{wrong with ``No $X$''} cases from the narrower subset of \emph{repairable reversals}, where a unique positive counterpart exists and can be deterministically remapped. This narrow operationalization yields an auditable per-prediction count of polarity-confused choices; individual presence-reversal events also appear in prior binary negation tests~\citep{negbench} but were not isolated as a category.

\subsection{Data sources and protocol construction}
\label{subsec:data}

\textbf{Internal ReXVQA.}
ReXVQA is a large-scale chest-X-ray VQA benchmark with 653,834 questions over 160,000 studies and five task families, including negation detection~\citep{rexvqa}. We use it as the in-distribution setting where aggregate accuracy remains high but explicit-negation contradictions are sparse and auditable. We report a 1k probe, a 5k slice, a 10k slice, and three seeded 10k samples pooled to 30k predictions, retaining the original questions and answer sets while measuring both exact match and contradiction counts.

\textbf{OpenI report-derived negation.}
Our first external protocol is built from OpenI reports~\citep{openi}. We extract study-level absent findings and instantiate ``Which of the following findings is absent on this chest X-ray study?'' The final protocol contains 359 aligned records, including 311 multi-image studies. Targets are pneumothorax (181), pleural effusion (114), focal consolidation (45), and pulmonary edema (19). A positive distractor is selected from co-mentioned present findings, so the finding concept and its negated surface form co-occur in the answer set.

\textbf{OpenI CheXStruct presence/absence.}
Our second external source uses CheXStruct annotations over OpenI studies from CXReasonBench~\citep{cxreasonbench}. It contains 1,375 records over 351 aligned images: 979 absence questions and 396 presence questions. The presence subset provides a retrospective audit of semantic reversal, since the image contains the target finding while the answer space still includes its negated option. Targets are mainly structured thoracic findings such as aortic knob enlargement, descending aortic enlargement, descending aortic tortuosity, tracheal deviation, and ascending aortic enlargement.

\textbf{CheXpert direct presence protocol.}
Our strongest external direct-presence protocol uses the CheXpert validation split~\citep{chexpert}. We group views by study, require the target finding to be cleanly present with at least one different positive distractor, and ask ``Which of the following findings is present on this chest X-ray study?'' Among 200 validation studies, 82 are constructible; per-finding instantiation yields 235 records (mean 2.87 per study). Each answer set contains $X$, its negated counterpart, and another positive finding, so selecting the negated option is an explicit semantic reversal.

\textbf{CheXpert matched positive-only control.}
To test whether direct-presence failures require the negated surface form, we build a matched positive-only control from the same CheXpert source. The question is unchanged, but the answer set contains the true present target and two absent distractors expressed only in positive form. Requiring a paired direct-presence item and two clean absent distractors yields 210 records from 77 studies, enabling paired comparison without changing the image distribution or task framing.

\textbf{CheXpert report-style negation.}
We also construct a CheXpert report-style absence protocol. We require the target finding to be cleanly absent and at least one different finding to be present, then instantiate the study-level absence template. Among 200 validation studies, 132 are constructible, producing 507 records. Targets are pneumothorax (126), consolidation (100), edema (90), pleural effusion (68), cardiomegaly (66), and atelectasis (57). The fixed option layout makes absence-side correction deterministic when the model selects the negated surface form.

\textbf{Why controlled answer spaces matter.}
CXR-ContraBench is a diagnostic instrument rather than a free-form clinical QA corpus. We therefore use controlled answer spaces with explicit polarity and limited lexical variation, following the diagnostic logic of hallucination and compositionality benchmarks~\citep{winoground,pope}. Because the negated option $o^{-}$ and target concept $o^{+}$ co-appear, polarity-confused predictions become directly countable, and both severity and mitigation can be attributed to polarity confusion rather than prompt variability. This design also lets us test CoT effects and non-canonical negation wording; Appendix~\ref{app:paraphrase} shows that paraphrase-aware deterministic matching restores coverage without extra model calls.

\begin{table}[t]
\caption{Qualitative comparison with related benchmarks. CXR-ContraBench uniquely focuses on multi-finding chest-X-ray polarity failure in generative medical VLMs, isolates presence-side reversal, analyzes CoT effects, and supports training-free deterministic repair.}
\label{tab:benchmark-comparison}
\centering
\small
\setlength{\tabcolsep}{3.6pt}
\renewcommand{\arraystretch}{1.10}
\begin{tabular}{@{}lcccccc@{}}
\toprule
\multirow{2}{*}{Benchmark}
  & CXR      & Gen. MVLM & Isolated       & Multi-finding & CoT       & Train-free \\
  & focus    & focus     & presence rev.  & CXR coverage  & analysis  & repair \\
\midrule
VQA-RAD~\citep{vqarad}    & \xmark      & \xmark & \xmark & \xmark & \xmark & \xmark \\
SLAKE~\citep{slake}       & \xmark      & \xmark & \xmark & \xmark & \xmark & \xmark \\
CARES~\citep{cares}       & \xmark      & \cmark & \xmark & \xmark & \xmark & \xmark \\
BenchX~\citep{benchx}     & \cmark      & \xmark & \xmark & \cmark & \xmark & \xmark \\
ReXVQA~\citep{rexvqa}     & \cmark      & \cmark & \xmark & \cmark & \xmark & \xmark \\
NegBench~\citep{negbench} & $\triangle$ & \xmark & \xmark & \xmark & \xmark & \xmark \\
\midrule
\textbf{CXR-ContraBench}
  & \textbf{\cmark} & \textbf{\cmark} & \textbf{\cmark}
  & \textbf{\cmark} & \textbf{\cmark} & \textbf{\cmark} \\
\bottomrule
\end{tabular}\\[3pt]
{\footnotesize $\triangle$ NegBench includes CheXpert negation tasks, but is not CXR-focused and does not isolate presence-side reversal as a clinical audit target.}
\vspace{-4mm}
\end{table}

\section{QCCV-Neg: A Deterministic Verifier}
\label{sec:qccv}

\textbf{Variant notation.}
We use the following notation throughout the experiments.
\ding{172} \textbf{B0} denotes the base multiple-choice prompting output without chain-of-thought or verification.
\ding{173} \textbf{CoT-B0} denotes the same backbone with a fixed hand-written chain-of-thought prompt, again without verification.
\ding{174} \textbf{M1} denotes \method{} applied post hoc to a B0 output.
\ding{175} \textbf{CoT+M1} denotes the same deterministic verifier applied to a CoT-B0 output.
Appendix-only variants are \ding{176} \textbf{Y0}, a confidence-based fallback using structured packet information, and \ding{177} \textbf{M2}, a location-sensitive scaffold for left/right and upper/lower conflicts. Thus B0 and CoT-B0 are model outputs, whereas M1 and CoT+M1 are deterministic post-hoc corrections of those outputs.

\textbf{Deterministic verifier.}
We use \textbf{\method} (Question-Conditioned Consistency Verifier for Negation) as a post-hoc deterministic verifier. Its role is analytic rather than architectural: it tests whether the polarity-confused subset exposed by CXR-ContraBench is precise enough to admit transparent repair. Given a baseline prediction, it activates only when four conditions hold: \ding{172} the answer set contains exactly one explicitly negated option; \ding{173} the baseline predicts that option; \ding{174} question polarity is safely identifiable as absence or presence; and \ding{175} a deterministic remapping exists without concept ambiguity. On absence-side protocols, the override maps to the designated positive target slot; on presence-side protocols, it searches the remaining options for the unique positive counterpart of the same concept. No additional model call is used.

\textbf{Design rationale.}
The design prioritizes auditability, determinism, and causal precision. If the trigger pattern is not exact, the prediction passes through unchanged. Because the verifier operates on controlled answer spaces and post-hoc outputs, each intervention can be attributed to a specific polarity conflict rather than generic reasoning variation. This matters because chain-of-thought can amplify, rather than suppress, the failure (Figure~\ref{fig:qccv}).

\textbf{Auxiliary ablations.}
Y0 and M2 are MedGemma-only ablations reported in the appendix rather than the main figures. Y0 tests whether a generic confidence-based fallback can explain the gains: it replaces B0 only when baseline confidence is below 0.80 and packet confidence exceeds it by at least 0.03. M2 tests whether adding a location-sensitive scaffold helps with left/right and upper/lower conflicts. Neither changes the paper's main conclusion, so the main text focuses on B0, CoT-B0, M1, and CoT+M1.

\begin{figure}[t]
  \centering
  \includegraphics[width=\linewidth]{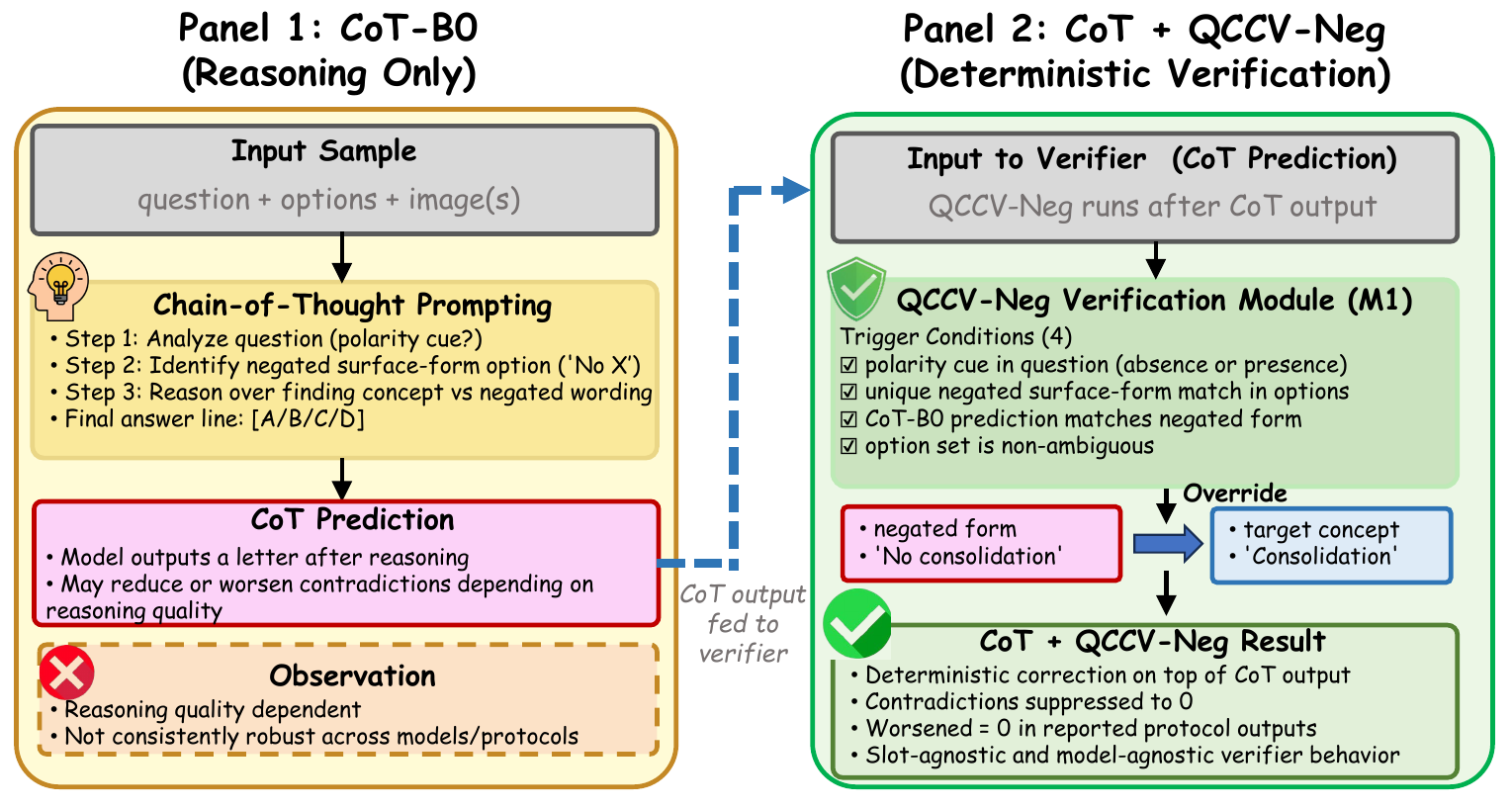}
    \caption{\textbf{Chain-of-thought prompting versus deterministic verification.}
    CoT changes the error distribution but is not a uniformly reliable fix for polarity failures across models and protocols.
    \method{} applies a four-condition trigger after the model output and removes the measured polarity-confused subset deterministically.}
  \label{fig:qccv}
  \vspace{-3mm}
\end{figure}

\section{Experiments}
\label{sec:exp}

\subsection{Setup}

We evaluate MedGemma-4B-it~\citep{medgemma}, LLaVA-1.5-7B, LLaVA-Med-1.5-7B~\citep{llavamed}, CheXagent-2-3B~\citep{chexagent}, and Qwen2.5-VL-7B-Instruct~\citep{qwen2vl} under full-context multiple-choice prompting (B0), with MedGemma-27B-it and GPT-4o added on the direct CheXpert presence probe, matched positive-only control, and external absence protocols. For MedGemma, LLaVA-Med, and Qwen2.5-VL, we also test fixed handwritten CoT prompts (CoT-B0; Appendix Figure~\ref{fig:app-cot-prompt-templates}). Three-shot ICL and in-format LoRA references are reported in Appendix~\ref{app:fewshot} and Appendix~\ref{app:lora}. \method{} (M1) is applied deterministically to source outputs; Y0 and M2 appear only in the appendix.

All results are exact-match multiple-choice evaluations. We report accuracy, prediction changes, intervention coverage, and polarity-error counts: contradictions for absence-side protocols and negated-option reversals for presence-side protocols. Retrospective audits additionally distinguish all wrong ``No $X$'' cases from repairable reversals. For absence-side external and internal B0/M1 comparisons, we report 95\% bootstrap CIs with paired permutation tests; CheXpert absence also uses study-clustered CIs. Direct presence probes are reported as point estimates.

\subsection{Main external results}

\begin{figure*}[t]
  \centering
  \includegraphics[width=\textwidth]{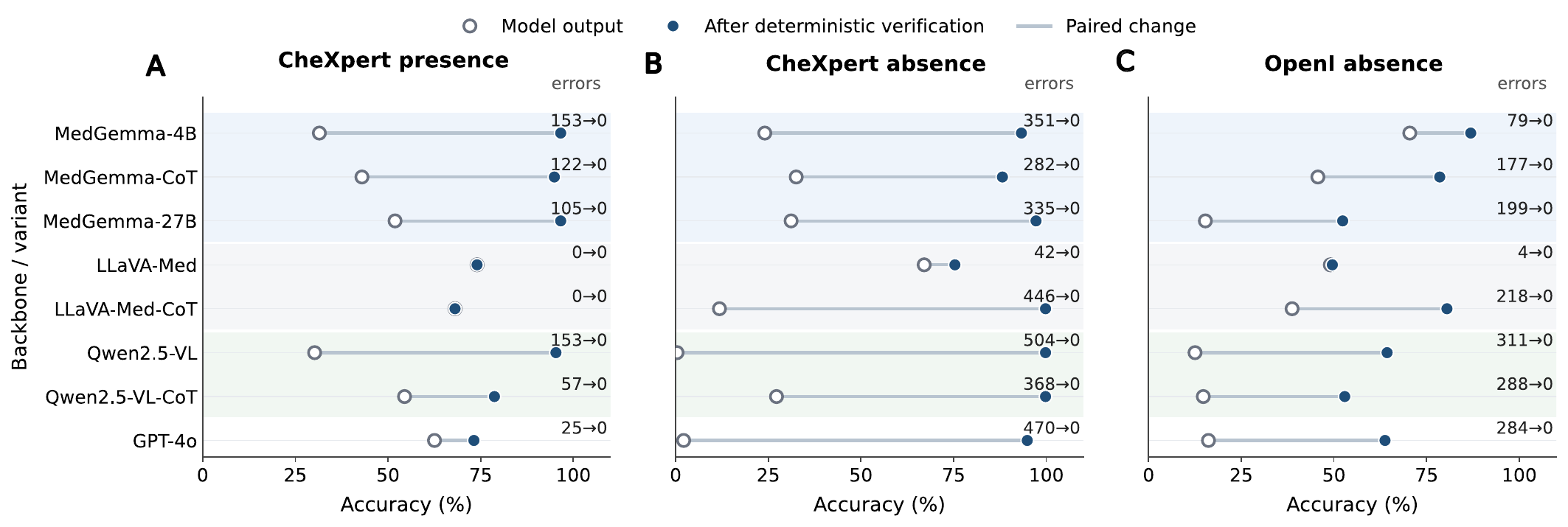}
  \caption{\textbf{Main external results.}
Open circles denote model outputs (B0 or CoT-B0), filled circles denote verified outputs (M1 or CoT+M1), and right-side labels show polarity errors before$\rightarrow$after.
Errors are dangerous ``No $X$'' reversals for direct presence and contradictions for absence protocols.
Full results are in Appendix Table~\ref{tab:external-results-full}.}
  \label{fig:external-results}
  \vspace{-2mm}
\end{figure*}

The main clinical-risk evidence comes from the direct CheXpert presence probe in Figure~\ref{fig:external-results}a. The failure is strong but not universal. MedGemma-4B and Qwen2.5-VL start at only 31.49\% and 30.21\% B0 accuracy and emit 153 dangerous negated-option reversals each; MedGemma-27B still emits 105 reversals at 51.91\% B0, while GPT-4o emits 25 at 62.55\%. \method{} repairs all measured reversals for these four backbones, raising accuracy to 96.60\%, 95.32\%, 96.60\%, and 73.19\%, respectively. By contrast, LLaVA-Med shows no measured direct presence reversal on this construction, so M1 leaves both its B0 and CoT-B0 accuracy unchanged. Other backbones, including LLaVA-1.5 and CheXagent, show smaller but still measurable reversal burdens; the full matrix is reported in Appendix Table~\ref{tab:external-results-full}. The direct presence result is therefore best understood as a backbone-dependent reliability failure rather than a universal property of every medical VLM.

Moreover, the retrospective OpenI presence audit shows that the same semantic reversal extends beyond a single direct protocol. MedGemma selects the negated option on 270 of 279 wrong presence questions (96.77\%). Under the deterministic matching rule, 197 of these are repairable reversals, raising accuracy from 29.55\% to 79.29\%; 73 wrong ``No $X$'' cases remain unresolved after M1. The wrong-answer negated-option rate remains similarly high for Qwen2.5-VL at 92.58\%, but it is lower for LLaVA-1.5 and LLaVA-Med at 41.61\% and 21.19\%, respectively. This audit reinforces the main point from the direct probe: presence-side reversal is systematic for some backbones, clinically risky when it occurs, and not well described by a single average across models.

A matched positive-only control sharpens causal interpretation (Appendix Table~\ref{tab:presence_positive_control}). On the paired 210-record subset, removing the negated option raises accuracy from 30.00\% to 56.67\% for MedGemma-4B, from 30.00\% to 46.19\% for Qwen2.5-VL, and from 50.95\% to 60.95\% for MedGemma-27B, but lowers it for GPT-4o (62.38\%$\rightarrow$45.71\%). This shows that the negated option is a major error driver for some backbones, while others retain broader present-finding discrimination errors even without negated wording.

Chain-of-thought prompting still does not substitute for polarity control. On the direct presence probe, MedGemma-CoT improves over MedGemma B0 in raw accuracy but still leaves 122 dangerous reversals, and Qwen2.5-VL-CoT still leaves 57. On this construction, CoT reduces the burden for these two backbones but does not resolve it. Figure~\ref{fig:external-results}b further shows that CoT can amplify negated-option attraction sharply on absence-side protocols, with LLaVA-Med rising from 42 to 446 contradictions on CheXpert. There, explicit negation contradiction remains cross-model but highly variable in magnitude: on CheXpert report-style negation, baseline accuracy ranges from 0.39\% to 67.06\% with 42 to 504 contradictions; on OpenI report-derived negation, it ranges from 2.51\% to 70.47\% with 4 to 311 contradictions. Across the completed absence-side external variants, \method{} eliminates every measured contradiction and yields positive accuracy deltas everywhere.

Full three-shot ICL results appear in Appendix~\ref{app:fewshot}; deterministic verification remains the only intervention that consistently removes the measured polarity-confused subset.

\subsection{Scale replication confirms systematic failure}

To verify that the observed failures are not artifacts of small validation samples, we construct a matched CheXpert training-split direct presence protocol with 135{,}754 records from 59{,}173 studies. MedGemma reaches 33.05\% B0 accuracy with 84{,}966 dangerous negated-option reversals, and Qwen2.5-VL reaches 30.47\% with 88{,}668 reversals; \method{} removes all of them and raises accuracy to 95.63\% and 95.78\%. The absence-side training-split replication remains consistent with the same mechanism: on 25{,}900 records, MedGemma rises from 49.67\% to 86.37\% and Qwen2.5-VL rises from 14.73\% to 64.54\%, again with all measured contradictions removed. Appendix Table~\ref{tab:chexpert-train-scale} and Figure~\ref{fig:app-scale-replication} give the scale-replication breakdown. We additionally evaluate MedGemma-27B-it and GPT-4o on the direct presence probe, the matched positive-only control, and the same external absence protocols; extended analysis and the matched positive-only control results appear in Appendix~\ref{app:large_models}.

\subsection{Supporting internal results}

Internal ReXVQA results (Appendix Table~\ref{tab:internal-results}) confirm that \method{} remains sparse and targeted in-distribution. On the five-backbone 1k probe, intervention coverage stays below 1\% and worsened remains 0 throughout; longer MedGemma-only runs at 5k, 10k, and pooled 30k retain the same picture. Contradictions drop to 0 in every run, but aggregate accuracy gains are small and non-significant under permutation testing, consistent with the sparse-intervention design.

The MedGemma-only ablations in Appendix~\ref{app:stability} show why generic structured packet fallback is not enough. Y0 changes 74 predictions on the 5k slice yet slightly decreases overall accuracy and leaves contradiction counts unchanged, while M2 adds no gain over M1 on the 1k probe.

\section{Analysis}
\label{sec:analysis}

\subsection{Aggregate accuracy hides polarity failure; retrospective audits expose its clinical form}
\begin{figure*}[htbp]
  \centering
  \includegraphics[width=\linewidth]{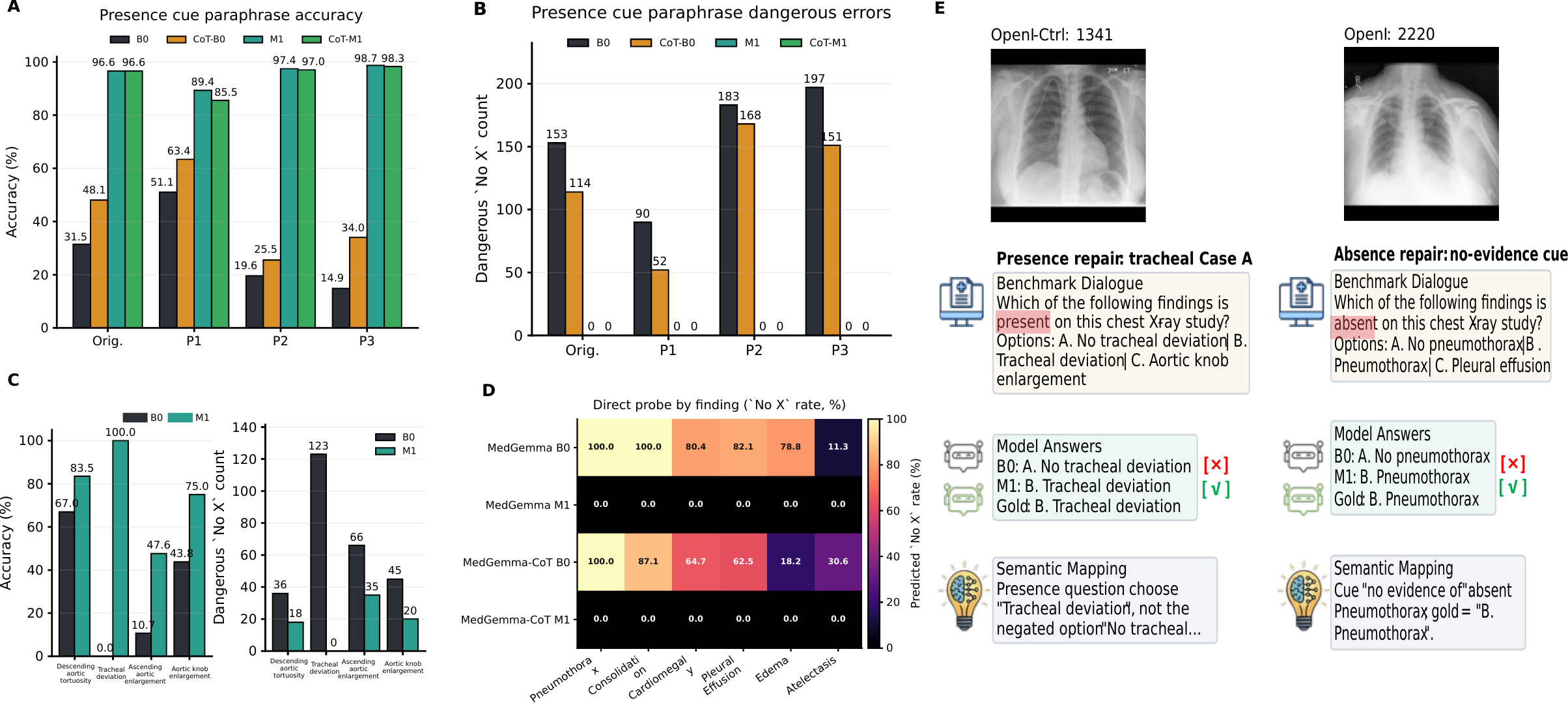}
  \caption{Main robustness atlas. \textbf{A--B}: Presence-cue paraphrase accuracy and dangerous negated-option counts across the original prompt and three paraphrases. \textbf{C}: OpenI CheXStruct finding-level accuracy and dangerous reversal counts under B0 and M1. \textbf{D}: Direct CheXpert presence-probe negated-option rate by finding and backbone. \textbf{E}: Representative presence-repair and absence-repair cases.}
  \label{fig:contradictions}
\end{figure*}
ReXVQA already contains the central discrepancy that motivates this paper. On the 5k slice, B0 reaches 82.42\% overall but only 43.13\% on explicit negation prompts (pooled 30k: 80.36\% vs.\ 42.18\%). The gap is clinically sharper on direct presence probes. On CheXpert direct presence, MedGemma achieves only 31.49\% and Qwen2.5-VL achieves only 30.21\%, even though both models remain competitive on broader chest-X-ray benchmarks. Contradiction-sensitive evaluation therefore expands a region of model behavior that aggregate metrics compress into a small residual error term.
Moreover, the OpenI CheXStruct split now serves two purposes. Its absence subset confirms the absence-side surface-form contradiction pattern (MedGemma: 19.00\%$\rightarrow$39.94\%). More importantly, its 396-example presence subset directly exposes the clinically risky presence-side semantic reversal. MedGemma selects the negated option on 270 of 279 wrong presence questions, consistent with the 96.77\% rate on this protocol. Of these, 197 satisfy the exact positive-counterpart criterion and are therefore repairable reversals under the deterministic rule, raising accuracy from 29.55\% to 79.29\%; 73 wrong ``No $X$'' cases remain after M1. The same wrong-answer pattern also appears across other backbones: Qwen2.5-VL emits negated-option predictions on 92.58\% of wrong presence cases, LLaVA-1.5 on 41.61\%, and LLaVA-Med on 21.19\% (Table~\ref{tab:openi-control}). Because the presence and absence subsets share source distribution, format, and scoring, this audit localizes the effect to polarity handling rather than to generic external noise.

\subsection{Chain-of-thought does not resolve polarity failure: cross-model evidence and error taxonomy}

The cross-model results show that chain-of-thought prompting does not reliably improve reliability in the negated-option-attraction regime. On the direct CheXpert presence probe, MedGemma-CoT still leaves 122 dangerous reversals and Qwen2.5-VL-CoT still leaves 57 despite improving raw accuracy over B0. On the absence-side protocols, the effect ranges from mixed to severe: MedGemma worsens on OpenI, while LLaVA-Med rises from 42 to 446 contradictions on CheXpert. Manual inspection suggests three recurring patterns: presence-side semantic reversal, absence-side selection of the negated surface form, and a smaller group where removing the negated option still leaves an incorrect distractor choice.

This profile is also not uniform across findings or prompts. Figure~\ref{fig:contradictions} collects four supporting views beyond the headline direct-probe result. Panels~A--B show that once the trigger vocabulary is extended beyond \emph{absent}, dangerous presence-side reversals are suppressed to 0 across the original prompt and all three paraphrase variants, with accuracy improving in parallel. Panel~C shows that the OpenI CheXStruct presence failure is concentrated in specific findings rather than spread uniformly across categories. Panel~D further shows that direct-presence negated-option rates are strongly heterogeneous across both findings and backbones. Panel~E grounds the benchmark logic in representative presence-repair and absence-repair cases, making the deterministic remapping path explicit. Appendix Table~\ref{tab:presence-finding-breakdown}, Table~\ref{tab:finding-breakdown}, and Figure~\ref{fig:app-finding-error-atlas} provide the full per-finding breakdown.

\subsection{Verifier sparsity, robustness, and generalization scope}
The internal experiments show that sparsity is deliberate rather than incidental. Because the verifier activates only on an exact answer-space signature, most predictions pass through unchanged. The 1k probe has no worsened case, and the pooled 30k run has exactly one, arising from a differential-diagnosis prompt phrased as ``least likely present'' (Appendix Figure~\ref{fig:app-internal-sparsity-atlas}).

Paraphrase robustness improves once the trigger vocabulary is extended beyond the literal negation cue. Table~\ref{tab:main-paraphrase-robustness} summarizes the CheXpert report-style absence stress test. B0 retains a large contradiction burden under all rewrites, and the original literal-surface trigger misses P1 and P3. By contrast, the extended trigger removes all measured contradictions across the original wording and three paraphrases, with no worsened case. Full direct-presence paraphrase results and layout-robustness checks remain in Appendix~\ref{app:paraphrase} and Appendix Figure~\ref{fig:app-robustness-atlas}.

\begin{table}[t]
\caption{Main paraphrase-robustness summary on CheXpert report-style absence. The original trigger misses P1 and P3, while the extended trigger removes all measured contradictions with no worsened case.}
\label{tab:main-paraphrase-robustness}
\centering
\small
\setlength{\tabcolsep}{3.6pt}
\renewcommand{\arraystretch}{1.08}
\begin{tabular}{@{}lccccc@{}}
\toprule
Variant & B0 Acc. & B0 Contrad. & Orig. M1 Contrad. & Ext. M1 Acc. & Ext. M1 Contrad. \\
\midrule
Original & 24.06 & 351 & 0   & 93.29 & 0 \\
P1       & 22.68 & 283 & 283 & 78.50 & 0 \\
P2       & 18.93 & 381 & 0   & 94.08 & 0 \\
P3       & 24.65 & 379 & 379 & 99.41 & 0 \\
\bottomrule
\end{tabular}
\vspace{-1mm}
\end{table}


\section{Conclusion}
\label{sec:conclusion}

This paper establishes negated-option attraction as a distinct, inference-time reliability failure in medical VLMs. Its most clinically dangerous form is presence-side semantic reversal, where a model selects the negated option even though the image contains finding $X$ and the question asks which finding is present. This failure can be hidden by aggregate accuracy and is not reliably resolved by chain-of-thought prompting. \method{} shows that negated-option attraction is measurable by construction and correctable at inference time without retraining. Because the verifier operates post hoc on the model's final prediction, it intercepts the failure regardless of whether the prediction was produced by direct prompting or by chain-of-thought reasoning. This property matters precisely because CoT does not reliably resolve the failure and can amplify it on the absence-side diagnostics. The verifier is not intended to improve general radiology reasoning; it targets a specific answer-space polarity conflict. That target-specific design enables auditability, deterministic behavior, and very low intervention coverage on internal ReXVQA runs.

\textbf{Broader impact and future scope.}
CXR-ContraBench makes a clinically meaningful silent error measurable: a model may choose a negated answer option, including non-canonical paraphrases, when the image and question support the opposite polarity. Its positive impact is to expose polarity failures that aggregate accuracy can hide. Beyond this benchmark, future work should examine the cross-modal representation geometry of polarity, including how visual findings and negated text are aligned in high-dimensional semantic spaces.



\bibliographystyle{refstyle}
\bibliography{reference}
\newpage

\appendix

%

\clearpage
\newpage
\appendix

\begin{center}
\Large\textbf{CXR-ContraBench: Benchmarking Negated-Option Attraction in Medical VLMs}

\vspace{0.3em}

\Large Supplementary Material
\end{center}

\vspace{0.8em}

\noindent\textbf{\Large Contents}

\vspace{0.3em}

\begin{itemize}
  \item \textbf{A~~Score-Level Analysis of Negated-Option Attraction} \dotfill \textbf{\pageref{app:theory_local}}

  \item \textbf{B~~Reproducibility and Compute} \dotfill \textbf{\pageref{app:repro}}

  \item \textbf{C~~Statistical Uncertainty of Main Results} \dotfill \textbf{\pageref{app:stats}}

  \item \textbf{D~~Additional Internal Stability Results} \dotfill \textbf{\pageref{app:stability}}

  \item \textbf{E~~Additional External Analyses} \dotfill \textbf{\pageref{app:external}}

  \item \textbf{F~~Large-Backbone Evaluation: MedGemma-27B and GPT-4o} \dotfill \textbf{\pageref{app:large_models}}

  \item \textbf{G~~Few-Shot In-Context Learning Baseline} \dotfill \textbf{\pageref{app:fewshot}}

  \item \textbf{H~~LoRA Fine-tuning Baseline} \dotfill \textbf{\pageref{app:lora}}

  \item \textbf{I~~Negation Paraphrase Stress Test} \dotfill \textbf{\pageref{app:paraphrase}}

  \item \textbf{J~~Responsible Release, Licenses, and Broader Impact} \dotfill \textbf{\pageref{app:licenses}}
\end{itemize}

\vspace{0.6em}

\section{Score-Level Analysis of Negated-Option Attraction}
\label{app:theory_local}

This appendix gives a local score-level account of negated-option attraction in the controlled multiple-choice setting of CXR-ContraBench. The claim is deliberately narrow. We do not propose an identified theory of VLM internals or a general account of negation understanding outside the benchmark's contrastive answer spaces. The goal is only to formalize a small and auditable failure pattern exposed directly by the benchmark: the model can keep the correct finding concept competitive in the answer set while still assigning the highest final score to its explicitly negated surface form. On a carefully delimited subset, that pattern also admits a deterministic remapping rule.

\subsection{Setup and Notation}

Let $\mathcal S$ denote the benchmark items considered in this appendix, let $\mathcal I$ and $\mathcal Q$ denote the image and question domains, let $\Omega$ denote a global option universe, and let $\mathcal C$ denote the global finding-concept space. Each item $s\in\mathcal S$ is written as $s=(I_s,q_s,\mathcal O_s,y_s)$, where $I_s\in\mathcal I$ is the study image set, $q_s\in\mathcal Q$ is the multiple-choice question, $\mathcal O_s\subseteq\Omega$ is the finite answer set, and $y_s\in\mathcal O_s$ is the gold option. Let $f_\theta:\mathcal I\times\mathcal Q\times\Omega\to\mathbb R$ be the model score function under parameters $\theta$, and let the induced prediction be $\hat y_\theta(s):=\arg\max_{o\in\mathcal O_s} f_\theta(I_s,q_s,o)$, with a fixed deterministic tie-breaking rule whenever the maximum is not unique. This notation keeps the item-level answer set $\mathcal O_s$ separate from the global option space $\Omega$.

Each option $o\in\Omega$ carries two attributes. The map $c(o)\in\mathcal C$ gives the finding concept, and the map $\rho(o)\in\{\mathrm{pos},\mathrm{neg}\}$ gives the surface polarity. Thus $\rho(o)=\mathrm{pos}$ denotes a positive wording such as ``edema,'' whereas $\rho(o)=\mathrm{neg}$ denotes an explicitly negated wording such as ``No edema.'' By construction, every answer set $\mathcal O_s$ contains exactly one explicitly negated option; denote it by $o_s^-$. If there exists a unique option $o_s^+\in\mathcal O_s\setminus\{o_s^-\}$ such that $c(o_s^+)=c(o_s^-)$ and $\rho(o_s^+)=\mathrm{pos}$, call $o_s^+$ the positive counterpart of $o_s^-$. Define $\mathcal S_{\mathrm{pair}}:=\{s\in\mathcal S:o_s^+\text{ exists uniquely}\}$ and $\mathcal S_{\mathrm{gold+}}:=\{s\in\mathcal S_{\mathrm{pair}}:y_s=o_s^+\}$. For the direct-presence and absence-side protocols analyzed here, the repairable cases lie in $\mathcal S_{\mathrm{gold+}}$: the negated option has a unique positive counterpart and that counterpart is the gold answer.

To analyze why such errors arise, we use the toy score decomposition $f_\theta(I,q,o)=u_\theta(I,q,c(o))+v_\theta(I,q,c(o),\rho(o))$, where $u_\theta(I,q,c)$ is a concept-support term capturing how strongly the image-question pair supports concept $c$, and $v_\theta(I,q,c,\rho)$ is a polarity-specific residual term capturing the remaining preference for one surface realization once the concept is fixed. This factorization is introduced only as an explanatory lens. It should be read at the level of option scores or conditional token probabilities, not as a literal claim about hidden-state geometry or the exact internal computation performed by the VLM. Throughout this appendix, each analyzed answer set contains at least one distractor option besides the concept-matched pair $\{o_s^+,o_s^-\}$.

\subsection{Local Score-Level Account}

\begin{proposition}[Concept preservation with within-concept polarity bias]
\label{prop:concept_polarity_dense}
Let $s\in\mathcal S_{\mathrm{gold+}}$, and write $x_s:=c(y_s)$. Suppose that
\[
\max\bigl\{f_\theta(I_s,q_s,o_s^+),\,f_\theta(I_s,q_s,o_s^-)\bigr\}>
\max_{o\in\mathcal O_s\setminus\{o_s^+,o_s^-\}} f_\theta(I_s,q_s,o)
\]
and that
\[
v_\theta(I_s,q_s,x_s,\mathrm{neg})>v_\theta(I_s,q_s,x_s,\mathrm{pos}).
\]
Then $\hat y_\theta(s)=o_s^-$.
\end{proposition}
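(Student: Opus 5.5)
The plan is a two-step reduction: the first hypothesis confines the argmax to the concept-matched pair $\{o_s^+,o_s^-\}$, and the score decomposition then decides the comparison within that pair using only the polarity residual.

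\textbf{Step 1 (confining the argmax).} Because $s\in\mathcal S_{\mathrm{gold+}}\subseteq\mathcal S_{\mathrm{pair}}$, the counterpart $o_s^+$ exists uniquely and equals $y_s$. Hence $c(o_s^+)=c(o_s^-)=x_s$, with $\rho(o_s^+)=\mathrm{pos}$ and $\rho(o_s^-)=\mathrm{neg}$. The standing assumption of the appendix guarantees at least one distractor outside the pair, so the maximum on the right-hand side of the first hypothesis is over a nonempty finite set and is well defined. The first hypothesis is a strict inequality. Therefore every $o\in\mathcal O_s\setminus\{o_s^+,o_s^-\}$ has a score strictly below $\max\{f_\theta(I_s,q_s,o_s^+),f_\theta(I_s,q_s,o_s^-)\}$. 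So no distractor attains the maximum over $\mathcal O_s$, and no distractor can be selected even under tie-breaking. It follows that $\hat y_\theta(s)\in\{o_s^+,o_s^-\}$.

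\textbf{Step 2 (within-pair comparison).} Expanding with the decomposition gives
\[
f_\theta(I_s,q_s,o_s^-)-f_\theta(I_s,q_s,o_s^+)=\bigl[u_\theta(I_s,q_s,x_s)-u_\theta(I_s,q_s,x_s)\bigr]+\bigl[v_\theta(I_s,q_s,x_s,\mathrm{neg})-v_\theta(I_s,q_s,x_s,\mathrm{pos})\bigr].
\]
The concept-support terms cancel because both options share the concept $x_s$. The second hypothesis then makes the difference strictly positive. Hence $o_s^-$ is the unique maximizer over the pair, and by Step 1 it is the unique maximizer over all of $\mathcal O_s$. Since the maximum is unique, the tie-breaking rule is never invoked, and $\hat y_\theta(s)=o_s^-$.

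\textbf{Main obstacle.} The argument is elementary; the only care needed is the interplay with tie-breaking. Strictness is essential in both hypotheses. The first rules out a distractor tying with the pair's maximum, and the second rules out a tie between $o_s^+$ and $o_s^-$. The other point to state explicitly is that the cancellation of $u_\theta$ relies on $c(o_s^+)=c(o_s^-)$, which is exactly where membership in $\mathcal S_{\mathrm{pair}}$ enters. Membership in $\mathcal S_{\mathrm{gold+}}$ is used only to identify $x_s$ with $c(y_s)$ and to frame $o_s^-$ as an error.
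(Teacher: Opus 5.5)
Your proposal is correct and follows essentially the same route as the paper's proof: the strict first hypothesis confines the argmax to $\{o_s^+,o_s^-\}$, and because $y_s=o_s^+$ gives $c(o_s^+)=c(o_s^-)=x_s$, the $u_\theta$ terms cancel, so the second hypothesis yields $f_\theta(I_s,q_s,o_s^-)>f_\theta(I_s,q_s,o_s^+)$. Your explicit treatment of tie-breaking and of the nonempty distractor set makes the same argument slightly more careful; it is not a different argument.
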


\begin{proof}
Because $s\in\mathcal S_{\mathrm{gold+}}$, we have $y_s=o_s^+$ and $c(o_s^+)=c(o_s^-)=x_s$. The decomposition therefore gives $f_\theta(I_s,q_s,o_s^+)=u_\theta(I_s,q_s,x_s)+v_\theta(I_s,q_s,x_s,\mathrm{pos})$ and $f_\theta(I_s,q_s,o_s^-)=u_\theta(I_s,q_s,x_s)+v_\theta(I_s,q_s,x_s,\mathrm{neg})$. The first assumption implies that no distractor option in $\mathcal O_s\setminus\{o_s^+,o_s^-\}$ can attain the global maximum, so the prediction must lie in $\{o_s^+,o_s^-\}$. The second assumption yields $f_\theta(I_s,q_s,o_s^-)>f_\theta(I_s,q_s,o_s^+)$, which forces $\hat y_\theta(s)=o_s^-$.
\end{proof}

Proposition~\ref{prop:concept_polarity_dense} isolates the local mechanism that matters for this benchmark. The correct finding concept need not lose to a distractor concept for the final answer to be wrong. It is enough that the concept-matched pair $\{o_s^+,o_s^-\}$ collectively outrank the distractors while the within-concept polarity term favors the negated wording. In that regime, the answer space remains anchored on the right finding concept, but the final decision shifts from the clinically correct positive realization to the explicitly negated one. The error is therefore concept-preserving at the level of the contrastive answer space even though it is answer-level incorrect. This interpretation is further supported by the matched positive-only control in Table~\ref{tab:presence_positive_control}: removing the negated option raises accuracy from 30.00\% to 56.67\% for MedGemma-4B and from 50.95\% to 60.95\% for MedGemma-27B on the paired subset, indicating that the negated surface form itself is a major driver of error for these backbones.

A minimal three-option example makes the mechanism concrete. Let $\mathcal O_s=\{o_s^+,o_s^-,o_{\mathrm{dist}}\}$ with $o_s^+=(x,\mathrm{pos})$, $o_s^-=(x,\mathrm{neg})$, and $o_{\mathrm{dist}}=(z,\mathrm{pos})$ for some $z\neq x$. If $u_\theta(I_s,q_s,x)=5.0$, $v_\theta(I_s,q_s,x,\mathrm{pos})=0.3$, $v_\theta(I_s,q_s,x,\mathrm{neg})=1.2$, $u_\theta(I_s,q_s,z)=3.0$, and $v_\theta(I_s,q_s,z,\mathrm{pos})=0.8$, then the three option scores are $5.3$, $6.2$, and $3.8$, respectively. The concept paired with the gold answer still dominates the distractor concept represented in the answer set, yet the negated surface form receives the largest final score. Figure~\ref{fig:contradictions}D further shows that the direct-presence negated-option rate varies sharply across findings and backbones, consistent with a concept-dependent polarity term $v_\theta(I,q,c,\rho)$ rather than a uniform global bias.

\begin{proposition}[Auditability of negated-option errors]
\label{prop:auditable_dense}
Let $s\in\mathcal S_{\mathrm{gold+}}$, and suppose that $o_s^-$ is the unique option in $\mathcal O_s$ satisfying $c(o)=c(y_s)$ and $\rho(o)=\mathrm{neg}$. Then $\hat y_\theta(s)=o_s^-$ if and only if $c(\hat y_\theta(s))=c(y_s)$ and $\rho(\hat y_\theta(s))\neq\rho(y_s)$.
\end{proposition}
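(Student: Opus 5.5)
The equivalence follows by unwinding the definitions of $\mathcal S_{\mathrm{gold+}}$, the positive counterpart, and the attribute maps $c$ and $\rho$; no score-level assumption is needed. First I record what membership in $\mathcal S_{\mathrm{gold+}}$ gives. We have $y_s=o_s^+$, so $c(y_s)=c(o_s^+)=c(o_s^-)$ and $\rho(y_s)=\mathrm{pos}$. Since $\rho$ takes values in the two-element set $\{\mathrm{pos},\mathrm{neg}\}$, the condition $\rho(o)\neq\rho(y_s)$ is equivalent to $\rho(o)=\mathrm{neg}$ for every $o\in\Omega$. This rewriting turns the right-hand side of the biconditional into the statement ``$c(\hat y_\theta(s))=c(y_s)$ and $\rho(\hat y_\theta(s))=\mathrm{neg}$.''

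For the forward direction, assume $\hat y_\theta(s)=o_s^-$. Then $c(\hat y_\theta(s))=c(o_s^-)=c(o_s^+)=c(y_s)$, where the middle equality is the defining property of the positive counterpart. Also $\rho(\hat y_\theta(s))=\rho(o_s^-)=\mathrm{neg}\neq\mathrm{pos}=\rho(y_s)$, since $o_s^-$ is by construction the explicitly negated option. Both conditions hold.

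For the converse, assume $c(\hat y_\theta(s))=c(y_s)$ and $\rho(\hat y_\theta(s))\neq\rho(y_s)$, which by the rewriting means $\rho(\hat y_\theta(s))=\mathrm{neg}$. The prediction is an argmax over $\mathcal O_s$ with deterministic tie-breaking, so $\hat y_\theta(s)\in\mathcal O_s$. Hence $\hat y_\theta(s)$ is an option in $\mathcal O_s$ with concept $c(y_s)$ and negative polarity. The hypothesis says $o_s^-$ is the unique such option, so $\hat y_\theta(s)=o_s^-$. Alternatively, the standing construction that each answer set contains exactly one explicitly negated option gives the same conclusion.

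The argument has no real obstacle. The only points needing care are two small observations. The first is that $\hat y_\theta(s)$ really lies in $\mathcal O_s$ rather than in the global universe $\Omega$. The second is that $\rho$ is binary, so that ``$\neq\rho(y_s)$'' pins the polarity to $\mathrm{neg}$. I will state both explicitly so the uniqueness hypothesis can be applied directly.
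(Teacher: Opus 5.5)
Your proposal is correct and follows essentially the same route as the paper's proof: use $y_s=o_s^+$ to get $\rho(y_s)=\mathrm{pos}$ and $c(y_s)=c(o_s^-)$ for the forward direction, then reduce $\rho(\hat y_\theta(s))\neq\mathrm{pos}$ to $\mathrm{neg}$ and apply the uniqueness hypothesis for the converse. Your explicit remarks that $\hat y_\theta(s)\in\mathcal O_s$ and that $\rho$ is two-valued only spell out steps the paper leaves implicit.
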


\begin{proof}
Since $s\in\mathcal S_{\mathrm{gold+}}$, we have $y_s=o_s^+$ and hence $\rho(y_s)=\mathrm{pos}$. If $\hat y_\theta(s)=o_s^-$, then $c(\hat y_\theta(s))=c(o_s^-)=c(y_s)$ and $\rho(\hat y_\theta(s))=\mathrm{neg}\neq\mathrm{pos}=\rho(y_s)$. Conversely, suppose that $c(\hat y_\theta(s))=c(y_s)$ and $\rho(\hat y_\theta(s))\neq\rho(y_s)$. Because $\rho(y_s)=\mathrm{pos}$, it follows that $\rho(\hat y_\theta(s))=\mathrm{neg}$. By the stated uniqueness assumption, the only option in $\mathcal O_s$ with concept $c(y_s)$ and negative surface polarity is $o_s^-$, so $\hat y_\theta(s)=o_s^-$.
\end{proof}

Proposition~\ref{prop:auditable_dense} explains why this failure is countable rather than merely descriptive. In an unrestricted answer space, concept confusion and polarity confusion can be entangled in ways that are hard to separate from generic reasoning error. Here the answer set explicitly contains both the gold concept and its negated surface form, so the polarity flip becomes a discrete event. The same formal event has different clinical readings across protocols. When the question asks which finding is present, selecting the concept-matched negated option is a presence-side semantic reversal. When the question asks which finding is absent, the same event becomes an absence-side contradiction that copies the negative surface form instead of naming the absent concept. The auditability claim is also reflected in the retrospective OpenI presence audit in Table~\ref{tab:openi-control}, where MedGemma selects the negated option on 270 of 279 wrong presence questions, indicating that the dominant failure on this protocol is a countable concept-matched polarity flip rather than arbitrary distractor confusion.

To match the implementation without overloading the formalism with parsing details, let $\mathcal S_{\mathrm{safe}}(\theta)\subseteq\mathcal S_{\mathrm{gold+}}$ denote the subset of items on which the deployed verifier trigger fires. This subset incorporates the conservative cue checks used in practice, including the requirement that the baseline prediction is the unique negated option and that a deterministic positive counterpart is available. We define the verifier output by
\[
T_\theta(s):=
\begin{cases}
o_s^+, & s\in\mathcal S_{\mathrm{safe}}(\theta),\\
\hat y_\theta(s), & s\notin\mathcal S_{\mathrm{safe}}(\theta).
\end{cases}
\]
By construction, $T_\theta$ is exact on $\mathcal S_{\mathrm{safe}}(\theta)$ and leaves the baseline prediction unchanged outside that subset.
At the dataset level, the aggregate accuracy change can be written as $\Delta_{\mathrm{acc}}(\theta)=\bigl(|\mathcal S_{\mathrm{help}}(\theta)|-|\mathcal S_{\mathrm{harm}}(\theta)|\bigr)/|\mathcal S|$, where $\mathcal S_{\mathrm{help}}(\theta)$ denotes items corrected by verification and $\mathcal S_{\mathrm{harm}}(\theta)$ denotes items worsened by it. By the definition of $\mathcal S_{\mathrm{safe}}(\theta)$ together with $T_\theta(s)$, we have $\mathcal S_{\mathrm{harm}}(\theta)=\emptyset$ and $\mathcal S_{\mathrm{help}}(\theta)=\mathcal S_{\mathrm{safe}}(\theta)$ on the studied protocols. This is the formal sense in which the measured repairable subset admits a deterministic post-hoc correction without observed regressions on the studied constructions. This behavioral pattern is reflected in the direct CheXpert presence results in Figure~\ref{fig:external-results}a and Table~\ref{tab:external-results-full}: MedGemma-4B and Qwen2.5-VL each emit 153 negated-option reversals at baseline, and deterministic remapping removes all of them, raising accuracy from 31.49\% to 96.60\% and from 30.21\% to 95.32\%, respectively.

\subsection{Theory--Experiment Bridge}
\label{app:theory_bridge}
\begin{figure}[t]
  \centering
  \includegraphics[width=\linewidth]{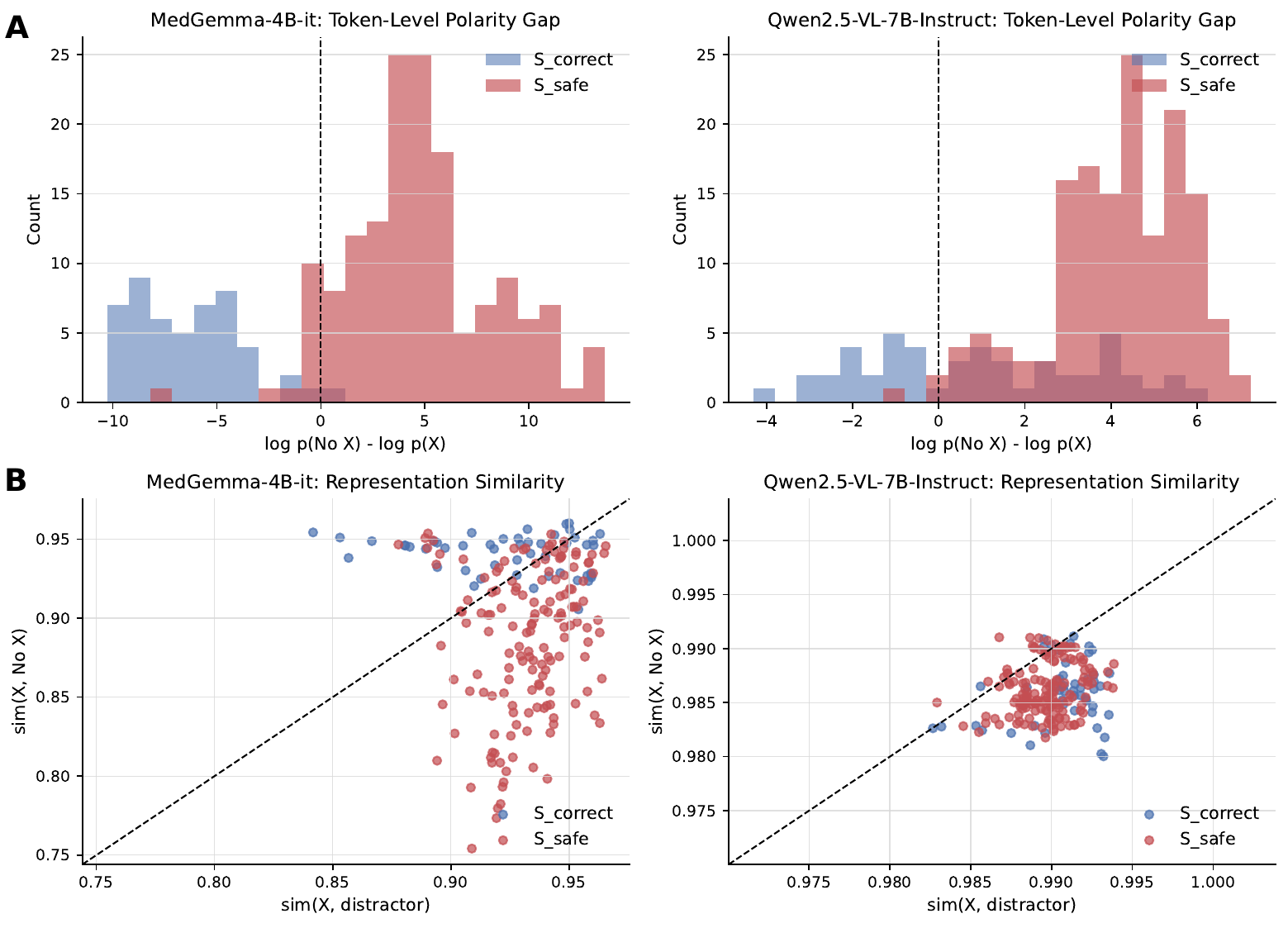}
  \caption{\textbf{Theory--experiment bridge for Proposition~\ref{prop:concept_polarity_dense}.} \textbf{A: token-level polarity gap.} For the direct-presence safe subset $\mathcal S_{\mathrm{safe}}(\theta)$, where the baseline predicts the negated option but deterministic remapping recovers the gold positive counterpart, the distribution of the measured log-probability gap between the negated surface form and its positive counterpart shifts strongly to the positive side for both MedGemma-4B-it and Qwen2.5-VL-7B-Instruct. By contrast, a correctly answered control subset with the same paired-answer structure is concentrated near or below zero. \textbf{B: representation similarity.} A simple last-token hidden-state probe does not recover the same separation: similarity between the positive and negated options is not consistently higher than similarity to the distractor under the current readout. We therefore interpret Proposition~\ref{prop:concept_polarity_dense} as a local score-level account supported by token probabilities, while treating the current representation-level evidence as a boundary condition rather than confirmatory mechanism evidence.}
  \label{fig:theory_bridge}
\end{figure}

The decomposition above makes a concrete empirical prediction. On repairable direct-presence failures, the option-level scorer should favor the negated surface form even though the underlying finding concept remains competitive against the distractor concepts available in the answer set. Figure~\ref{fig:theory_bridge} evaluates that prediction on $\mathcal S_{\mathrm{safe}}(\theta)$, namely the items for which the baseline predicts the negated option and deterministic remapping recovers the unique positive counterpart. The figure also compares this safe subset with a correctly answered control subset having the same paired-answer structure. The goal of this subsection is not to prove the decomposition from data, but to show that the measured token-level behavior matches the local score-level account closely enough to support it as an explanatory bridge.

The top row (Fig. \ref{fig:theory_bridge}A) audits polarity at the token level by plotting the measured log-probability gap between the negated option string and its positive counterpart on $\mathcal S_{\mathrm{safe}}(\theta)$ and on the matched correctly answered control subset. If Proposition~\ref{prop:concept_polarity_dense} is the right local account, the safe subset should display a strong positive shift at the option level even when the correct finding concept remains viable. That is what we observe. On $\mathcal S_{\mathrm{safe}}(\theta)$, the mean option-level gap is 4.790 nats for MedGemma-4B-it and 4.116 nats for Qwen2.5-VL-7B-Instruct, whereas the corresponding concept-only continuation difference between positive and negated contexts is only 0.000 and 0.001 nats.

The bottom row (Fig. \ref{fig:theory_bridge}B) provides a useful boundary condition. A simple last-token hidden-state probe does not recover the same separation: similarity between the positive and negated options is not consistently higher than similarity to the distractor under the current readout. We therefore do not interpret Proposition~\ref{prop:concept_polarity_dense} as a claim about any particular hidden-state geometry. The strongest support for the appendix theory comes from option scores and conditional token probabilities, which align with the score-level account. Under the current probe, the representation result is informative mainly because it marks a limit on what the present evidence can justify.

This analysis is intentionally local. The factorization $f_\theta=u_\theta+v_\theta$ is a toy decomposition introduced for interpretability, not an identified model of the VLM internals. The propositions apply only to controlled contrastive answer spaces in which an explicitly negated option and a unique positive counterpart are both present. They do not address free-form report generation, implicit negation, or broader clinical reasoning. The appendix therefore supports a narrow claim: within this benchmark, concept-level competitiveness and answer-level correctness can diverge in a structured, auditable way, and the repairable subset of that divergence admits a transparent deterministic correction.

\section{Reproducibility and Compute}
\label{app:repro}

This section documents the public benchmark substrate, script-level reproducibility details, and compute configuration underlying all reported experiments.

\subsection{Benchmark Composition}

Figure~\ref{fig:app-benchmark-composition} provides a five-panel atlas of the public data substrate behind the main external analyses.
Panel~A shows that the OpenI CheXStruct retrospective audit split is the largest public component, which is why it supports both the main presence-side reversal analysis and the supporting absence-side analysis.
Panel~B shows that a small set of target finding families dominates the public protocols, confirming that the benchmark is clinically structured rather than arbitrary.
Panel~C summarizes image-count distribution and confirms that multi-image studies exist even though single-image cases dominate.
Panel~D compares protocol size with target-family breadth, and Panel~E makes the public-data substrate concrete through representative thumbnails from OpenI and CheXpert without exposing internal ReXVQA images.

\begin{figure*}[htbp]
  \centering
  \includegraphics[width=\linewidth]{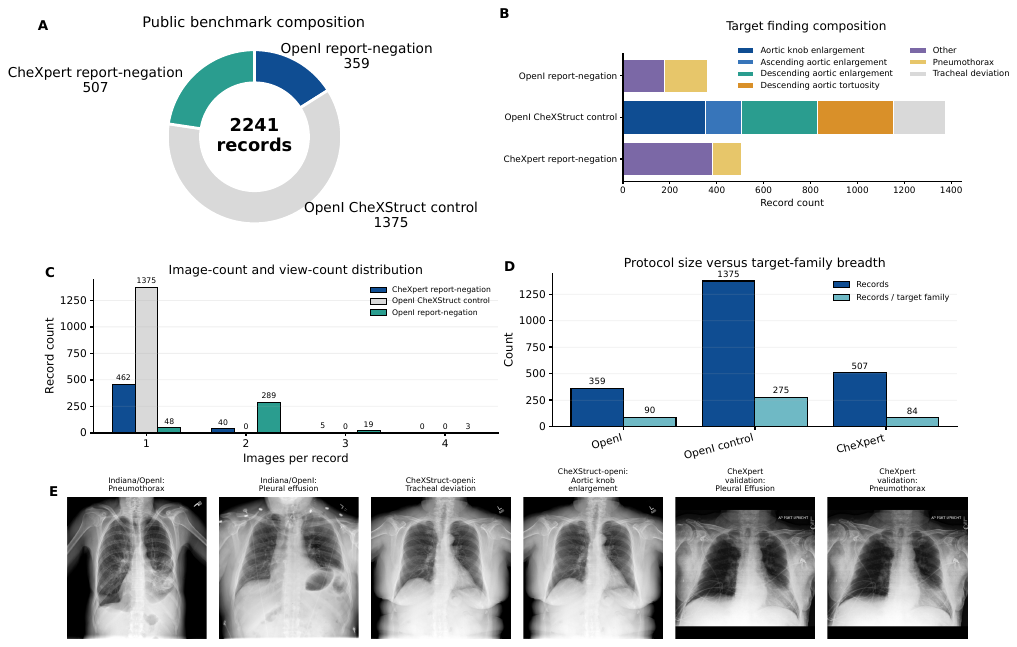}
  \caption{Benchmark composition atlas. The panels summarize public protocol size, target-finding composition, image-count distribution, and representative public thumbnails from OpenI and CheXpert. This figure documents the public benchmark substrate behind the main external analyses.}
  \label{fig:app-benchmark-composition}
\end{figure*}



\subsection{Verifier Logic and Thresholds}

Y0 replaces the baseline answer only when baseline confidence is below 0.80 and packet confidence exceeds it by at least 0.03. M1 is deterministic and requires an explicit negation cue in the question, exactly one negative option, baseline selection of that negative option, and a non-ambiguous option set. In the original protocol builders the clinically normalized target concept is placed in the first positive slot, but the slot-agnostic variant used for the layout analysis instead matches the positive concept directly and exactly reproduces the earlier slot-0 behavior on the tested original-layout CheXpert runs. The paraphrase robustness analysis keeps the same replacement logic but extends the trigger vocabulary beyond \emph{absent}. M2 is a location-conflict scaffold and is reported only as an ablation on the 1k probe.

\subsection{Model-Specific Execution Notes}

External multimodel runs share the same protocol JSONs and scoring code, but model loaders are adapted to each backbone. In our environment, CheXagent required \texttt{trust\_remote\_code} loading, explicit attention masks, \texttt{use\_cache=False}, and float32 inference for stability. CheXagent also preserves all study images, whereas LLaVA-1.5 and LLaVA-Med consume the first image in a study. CoT variants differ only in baseline prompting; M1 is applied to the resulting prediction without changing the verifier logic.

\subsection{Reported CoT Prompt Templates}

Because the CoT results are prompt-template-sensitive, Figure~\ref{fig:app-cot-prompt-templates} summarizes the fixed scaffolds used in the reported runs. The absence-side OpenI and CheXpert negation protocols use \texttt{absence\_step}; the direct CheXpert presence probe uses the related \texttt{polarity\_step} scaffold.

\begin{figure*}[htbp]
\centering

\begin{tcolorbox}[
  enhanced,
  colback=cotgray,
  colframe=cotline,
  boxrule=0.8pt,
  arc=2mm,
  left=2mm,right=2mm,top=1.5mm,bottom=1.5mm,
  width=0.96\linewidth
]
\small
\textbf{Shared CoT skeleton used in all reported CoT-B0 runs.}
Each prompt first presents the question and answer options, then forces an explicit polarity-disambiguation routine:
\begin{enumerate}[leftmargin=5mm,itemsep=1pt,topsep=2pt]
    \item identify whether the question asks about \textbf{absence} or \textbf{presence},
    \item identify the \textbf{finding concept itself} (without negation),
    \item identify the \textbf{negated surface form} (e.g., ``No $X$''),
    \item output a \textbf{single final answer letter}.
\end{enumerate}
\end{tcolorbox}

\vspace{1.5mm}

\begin{minipage}[t]{0.485\textwidth}
\begin{tcolorbox}[
  enhanced,
  equal height group=cotpair,
  valign=top,
  colback=absbluebg,
  colframe=absblue,
  boxrule=1pt,
  arc=2mm,
  title=\textbf{Absence-side scaffold (\texttt{absence\_step})},
  fonttitle=\bfseries,
  coltitle=black,
  left=2mm,right=2mm,top=1.2mm,bottom=1.2mm
]
\small
\textbf{Input format}
\begin{itemize}[leftmargin=4mm,itemsep=1pt,topsep=2pt]
    \item Question text
    \item A/B/C/D answer options
\end{itemize}

\vspace{0.5mm}
\textbf{Fixed reasoning steps}
\begin{enumerate}[leftmargin=5mm,itemsep=1pt,topsep=2pt]
    \item Does the question ask for an \textbf{ABSENT / NOT PRESENT} finding?
    \item Which option names the \textbf{finding concept itself}?
    \item Which option is the \textbf{negated surface form} (``No $X$'')?
\end{enumerate}

\vspace{0.5mm}
\begin{tcolorbox}[
  enhanced,
  colback=white,
  colframe=absblue,
  boxrule=0.7pt,
  arc=1.4mm,
  left=1.5mm,right=1.5mm,top=1mm,bottom=1mm
]
\small
\textbf{Decision rule}\\
If the question asks for \textbf{absence}, choose the \textbf{finding name}, \emph{not} the negated form.
\end{tcolorbox}

\vspace{0.5mm}
\textbf{Output}\\
Final answer: \texttt{[A/B/C/D]}
\end{tcolorbox}
\end{minipage}\hfill
\begin{minipage}[t]{0.485\textwidth}
\begin{tcolorbox}[
  enhanced,
  equal height group=cotpair,
  valign=top,
  colback=presgreenbg,
  colframe=presgreen,
  boxrule=1pt,
  arc=2mm,
  title=\textbf{Presence-side scaffold (\texttt{polarity\_step})},
  fonttitle=\bfseries,
  coltitle=black,
  left=2mm,right=2mm,top=1.2mm,bottom=1.2mm
]
\small
\textbf{Input format}
\begin{itemize}[leftmargin=4mm,itemsep=1pt,topsep=2pt]
    \item Question text
    \item A/B/C/D answer options
\end{itemize}

\vspace{0.5mm}
\textbf{Fixed reasoning steps}
\begin{enumerate}[leftmargin=5mm,itemsep=1pt,topsep=2pt]
    \item Does the question ask about \textbf{absence} or \textbf{presence}?
    \item Which option names the \textbf{finding concept itself}?
    \item Which option is the \textbf{negated surface form} (``No $X$'')?
\end{enumerate}

\vspace{0.5mm}
\begin{tcolorbox}[
  enhanced,
  colback=white,
  colframe=presgreen,
  boxrule=0.7pt,
  arc=1.4mm,
  left=1.5mm,right=1.5mm,top=1mm,bottom=1mm
]
\small
\textbf{Decision rule}\\
If the question asks for \textbf{absence}, choose the finding name rather than the negated form.\\
If the question asks for \textbf{presence}, choose the \textbf{present finding} and avoid the negated form.
\end{tcolorbox}

\vspace{0.5mm}
\textbf{Output}\\
Final answer: \texttt{[A/B/C/D]}
\end{tcolorbox}
\end{minipage}

\vspace{1mm}

\begin{tcolorbox}[
  enhanced,
  colback=warnorangebg,
  colframe=warnorange,
  boxrule=0.8pt,
  arc=2mm,
  left=2mm,right=2mm,top=1mm,bottom=1mm,
  width=0.96\linewidth
]
\small
\textbf{Key difference.}
Both prompts force explicit polarity disambiguation, but \texttt{polarity\_step} adds an explicit \textbf{presence-versus-absence branch} before the final answer, whereas \texttt{absence\_step} assumes an absence-targeted question from the outset.
\end{tcolorbox}

\caption{Compact schematic of the fixed hand-written CoT prompt scaffolds used in the reported CoT-B0 runs. Both templates share the same polarity-disambiguation skeleton; the presence-side scaffold differs mainly in its explicit branch between absence and presence questions.}
\label{fig:app-cot-prompt-templates}
\end{figure*}

\subsection{Uncertainty Estimates and Compute}

Statistical uncertainty is computed by \texttt{scripts/28\_bootstrap\_ci.py}, which performs 2,000 example-level bootstrap resamples with seed 42 for accuracy and contradiction counts and paired permutation tests for B0-vs.-M1 accuracy deltas. The reported experiments were executed in a local multi-GPU environment using a micromamba-managed Python stack. The 10k ReXVQA runs used 8 shards over 4 GPUs (2 processes per GPU); external protocols used 4 shards over 4 GPUs. The paper reports only inference-time experiments and post-hoc deterministic verification; no additional VLM training is introduced.

\section{Statistical Uncertainty of Main Results}
\label{app:stats}

Aggregate accuracy alone does not reveal whether reported B0-to-M1 gains are statistically reliable.
This section provides full bootstrap confidence intervals, paired permutation tests, and a study-level robustness check for the completed absence-side external comparisons and internal ReXVQA runs. The direct presence probes are reported as point estimates.

\subsection{Bootstrap Protocol}

For every main B0/M1 comparison, we estimate 95\% bootstrap confidence intervals from 2,000 resamples (seed~42). We additionally run paired permutation tests on example-level correctness to evaluate whether the B0-to-M1 accuracy delta could arise under the null of no improvement. Figure~\ref{fig:app-uncertainty-atlas} provides the full visual summary across all completed runs. Panel~A places all external B0-to-M1 deltas on the same forest plot; every completed external gain stays positive. Panel~B gives the corresponding internal ReXVQA forest plot, where gains remain small and often non-significant, consistent with the sparse-intervention characterization in the main text. Panel~C reports contradiction-count intervals and shows that the external contradiction burden is large and tightly estimated. Panel~D summarizes permutation-test $p$~values on a common scale, making the contrast between strongly significant external gains and modest internal gains visually explicit.

\begin{figure*}[htbp]
  \centering
  \includegraphics[width=\linewidth]{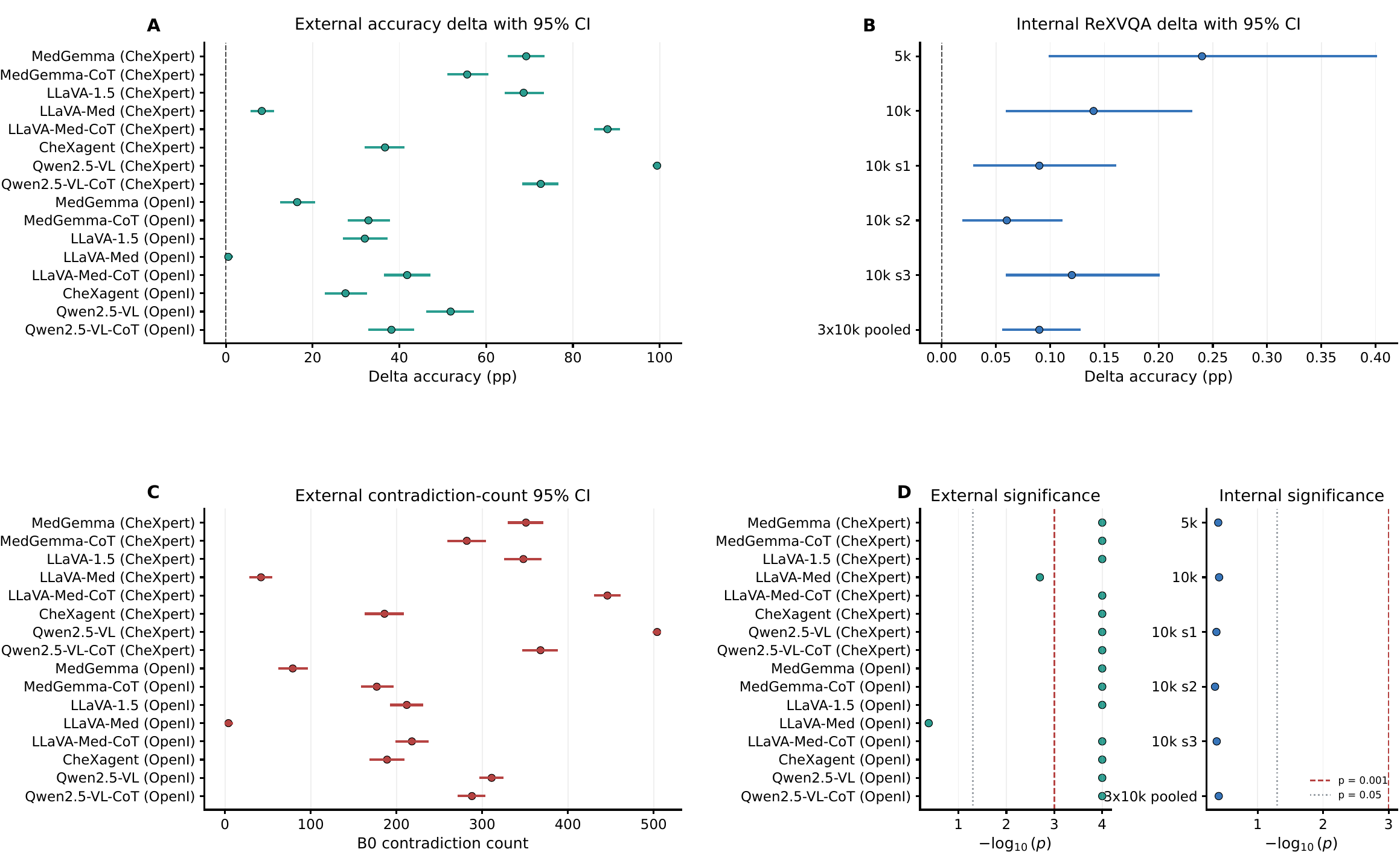}
  \caption{Statistical uncertainty atlas. The panels report external and internal 95\% bootstrap confidence intervals, external contradiction-count intervals, and permutation-test $p$ values for all completed B0-to-M1 comparisons.}
  \label{fig:app-uncertainty-atlas}
\end{figure*}

\subsection{Study-Level Robustness on CheXpert}

Because the CheXpert protocol contains 507~records drawn from 132~studies (mean 3.84~records per study), example-level resampling may underestimate confidence interval width by treating within-study records as independent.
To address this, we additionally report study-clustered bootstrap CIs that resample at the study level: in each of 2,000~iterations, 132~studies are drawn with replacement and all their records are included (Table~\ref{tab:study_bootstrap}).
Study-level delta CIs are slightly wider than example-level CIs (median width 8.18~pp vs.\ 7.79~pp), yet all eight CheXpert variants remain significant under study-level permutation tests ($p < 0.001$ throughout), with no change in significance status relative to Table~\ref{tab:external-stats}. The main conclusions therefore survive a stricter resampling scheme that respects within-study correlation.

\begin{table}[h]
\centering
\caption{Example-level vs.\ study-level bootstrap 95\% CIs for the B0-to-M1 accuracy delta on CheXpert ($n{=}507$ records, 132~studies). Study-level resampling accounts for within-study record correlation. All variants remain significant under both resampling schemes.}
\label{tab:study_bootstrap}
\small
\begin{tabular}{lrrrrrr}
\toprule
Variant &
\multicolumn{2}{c}{Example-level CI} & $p$ &
\multicolumn{2}{c}{Study-level CI} & $p$ \\
\cmidrule(lr){2-3} \cmidrule(lr){5-6}
& low & high & & low & high & \\
\midrule
MedGemma & 65.28 & 73.18 & $<0.001$ & 64.59 & 73.68 & $<0.001$ \\
MedGemma-CoT & 51.28 & 60.16 & $<0.001$ & 51.36 & 59.81 & $<0.001$ \\
LLaVA-1.5 & 64.50 & 72.98 & $<0.001$ & 64.12 & 73.17 & $<0.001$ \\
LLaVA-Med & 5.92 & 10.85 & 0.002 & 5.79 & 10.89 & $<0.001$ \\
LLaVA-Med-CoT & 85.21 & 90.53 & $<0.001$ & 85.29 & 90.70 & $<0.001$ \\
CheXagent & 32.35 & 40.83 & $<0.001$ & 30.15 & 43.60 & $<0.001$ \\
Qwen2.5-VL & 98.62 & 100.00 & $<0.001$ & 98.66 & 100.00 & $<0.001$ \\
Qwen2.5-VL-CoT & 68.64 & 76.33 & $<0.001$ & 68.51 & 76.42 & $<0.001$ \\
\bottomrule
\end{tabular}
\end{table}

\subsection{Full External Confidence Intervals}

Table~\ref{tab:external-stats} reports exact accuracy and contradiction-count intervals for every completed external comparison, and serves as the precise statistical backstop for the main external claim. Fourteen of the sixteen B0-to-M1 gains satisfy $p<0.001$; the two exceptions are the non-CoT LLaVA-Med rows, consistent with that model's unusually low baseline contradiction count on these protocols.

\begin{table*}[htbp]
\caption{Bootstrap confidence intervals and paired permutation tests for all completed external absence comparisons. Fourteen of the sixteen B0-to-M1 gains satisfy $p<0.001$; the two exceptions are the non-CoT LLaVA-Med rows.}
\label{tab:external-stats}
\centering
\scriptsize
\resizebox{\textwidth}{!}{%
\begin{tabular}{llcccc}
\toprule
Variant & Protocol & B0 Acc. [95\% CI] & M1 Acc. [95\% CI] & Contrad. $B0$ [95\% CI] $\rightarrow M1$ & Perm. $p$ \\
\midrule
MedGemma & CheXpert & 24.06 [20.51, 27.81] & 93.29 [91.12, 95.47] & 351 [331, 370] $\rightarrow$ 0 & $<0.001$ \\
MedGemma & OpenI & 70.47 [65.74, 75.21] & 86.91 [83.57, 90.25] & 79 [64, 95] $\rightarrow$ 0 & $<0.001$ \\
MedGemma-CoT & CheXpert & 32.54 [28.60, 36.69] & 88.17 [85.40, 90.93] & 282 [261, 303] $\rightarrow$ 0 & $<0.001$ \\
MedGemma-CoT & OpenI & 45.68 [40.39, 50.70] & 78.55 [74.09, 82.73] & 177 [160, 195] $\rightarrow$ 0 & $<0.001$ \\
LLaVA-1.5 & CheXpert & 27.42 [23.47, 31.36] & 96.06 [94.28, 97.64] & 348 [327, 368] $\rightarrow$ 0 & $<0.001$ \\
LLaVA-1.5 & OpenI & 40.39 [35.38, 45.40] & 72.42 [67.97, 76.88] & 212 [194, 230] $\rightarrow$ 0 & $<0.001$ \\
LLaVA-Med & CheXpert & 67.06 [63.31, 71.20] & 75.35 [71.60, 79.09] & 42 [30, 54] $\rightarrow$ 0 & 0.002 \\
LLaVA-Med & OpenI & 49.03 [43.73, 54.04] & 49.58 [44.57, 54.87] & 4 [1, 8] $\rightarrow$ 0 & 0.417 \\
LLaVA-Med-CoT & CheXpert & 11.83 [9.07, 14.79] & 99.80 [99.41, 100.00] & 446 [432, 460] $\rightarrow$ 0 & $<0.001$ \\
LLaVA-Med-CoT & OpenI & 38.72 [33.70, 43.73] & 80.50 [76.32, 84.40] & 218 [200, 236] $\rightarrow$ 0 & $<0.001$ \\
CheXagent & CheXpert & 6.51 [4.54, 8.68] & 43.20 [39.05, 47.53] & 186 [164, 207] $\rightarrow$ 0 & $<0.001$ \\
CheXagent & OpenI & 2.51 [1.11, 4.18] & 30.08 [25.35, 34.83] & 189 [170, 208] $\rightarrow$ 0 & $<0.001$ \\
Qwen2.5-VL & CheXpert & 0.39 [0.00, 0.99] & 99.80 [99.41, 100.00] & 504 [500, 507] $\rightarrow$ 0 & $<0.001$ \\
Qwen2.5-VL & OpenI & 12.53 [9.19, 15.88] & 64.35 [59.33, 69.08] & 311 [298, 323] $\rightarrow$ 0 & $<0.001$ \\
Qwen2.5-VL-CoT & CheXpert & 27.22 [23.47, 31.16] & 99.80 [99.41, 100.00] & 368 [348, 387] $\rightarrow$ 0 & $<0.001$ \\
Qwen2.5-VL-CoT & OpenI & 14.76 [11.42, 18.38] & 52.92 [47.91, 57.94] & 288 [273, 302] $\rightarrow$ 0 & $<0.001$ \\
\bottomrule
\end{tabular}
\vphantom{A}}
\end{table*}

\subsection{Internal MedGemma Confidence Intervals}

In contrast to the external regime, internal ReXVQA gains are consistently positive but remain small and non-significant by permutation test (Table~\ref{tab:internal-stats}). This pattern is important for honest reporting: internal ReXVQA supports the sparse-intervention claim but not a large aggregate-accuracy claim. The effect sizes in Table~\ref{tab:internal-stats} therefore serve as an upper bound on what the verifier contributes in a high-accuracy in-distribution setting.

\begin{table}[htbp]
\caption{Bootstrap confidence intervals and paired permutation tests for completed internal MedGemma ReXVQA comparisons. The gains are consistently positive but remain small and non-significant by permutation test.}
\label{tab:internal-stats}
\centering
\scriptsize
\resizebox{\linewidth}{!}{%
\begin{tabular}{lcccc}
\toprule
Setting & B0 Acc. [95\% CI] & M1 Acc. [95\% CI] & $\Delta$ pp [95\% CI] & Perm. $p$ \\
\midrule
ReXVQA 5k & 82.42 [81.40, 83.46] & 82.66 [81.60, 83.72] & 0.24 [0.10, 0.40] & 0.399 \\
ReXVQA 10k & 81.28 [80.55, 82.07] & 81.42 [80.68, 82.21] & 0.14 [0.06, 0.23] & 0.387 \\
ReXVQA 10k s1 & 80.53 [79.75, 81.32] & 80.62 [79.87, 81.38] & 0.09 [0.03, 0.16] & 0.424 \\
ReXVQA 10k s2 & 80.34 [79.55, 81.12] & 80.40 [79.62, 81.13] & 0.06 [0.02, 0.11] & 0.446 \\
ReXVQA 10k s3 & 80.21 [79.44, 80.98] & 80.33 [79.59, 81.14] & 0.12 [0.06, 0.20] & 0.420 \\
ReXVQA pooled & 80.36 [79.92, 80.80] & 80.45 [79.98, 80.90] & 0.09 [0.06, 0.13] & 0.392 \\
\bottomrule
\end{tabular}
\vphantom{A}}
\end{table}

\section{Additional Internal Stability Results}
\label{app:stability}

The main text summarizes internal ReXVQA results compactly. This section provides the full supporting evidence for sparsity, seeded reproducibility, and the ablation comparisons motivating the choice of M1 over Y0 and M2.

\subsection{Main Internal Results}

\begin{table*}[htbp]
\caption{Supporting internal ReXVQA results. The 1k probe spans five backbones; the longer 5k, 10k, and pooled runs remain MedGemma-only. On the 1k probe, intervention coverage stays below 1\% and worsened remains 0 for every tested backbone.}
\label{tab:internal-results}
\centering
\small
\resizebox{\textwidth}{!}{%
\begin{tabular}{llccccccccc}
\toprule
Setting & Model & B0 Acc. & M1 Acc. & $\Delta$ pp & Changed & Improved & Worsened & Explicit-neg. $\Delta$ pp & Cov. \% & Contradictions \\
\midrule
1k probe & MedGemma & 82.10 & 82.50 & +0.40 & 4 & 4 & 0 & +8.51 & 0.40 & 4 $\rightarrow$ 0 \\
1k probe & LLaVA-1.5 & 34.40 & 35.00 & +0.60 & 6 & 6 & 0 & +12.77 & 0.60 & 6 $\rightarrow$ 0 \\
1k probe & LLaVA-Med & 27.30 & 27.30 & +0.00 & 1 & 0 & 0 & +0.00 & 0.10 & 1 $\rightarrow$ 0 \\
1k probe & Qwen2.5-VL & 67.90 & 68.50 & +0.60 & 6 & 6 & 0 & +12.76 & 0.60 & 6 $\rightarrow$ 0 \\
1k probe & CheXagent & 52.70 & 53.20 & +0.50 & 6 & 5 & 0 & +10.64 & 0.60 & 6 $\rightarrow$ 0 \\
\midrule
5k slice & MedGemma & 82.42 & 82.66 & +0.24 & 17 & 13 & 1 & +5.69 & 0.34 & 17 $\rightarrow$ 0 \\
\midrule
10k slice & MedGemma & 81.28 & 81.42 & +0.14 & 24 & 16 & 2 & +3.23 & 0.24 & 24 $\rightarrow$ 0 \\
\midrule
3x10k pooled & MedGemma & 80.36 & 80.45 & +0.09 & 51 & 28 & 1 & +1.99 & 0.17 & 51 $\rightarrow$ 0 \\
\bottomrule
\end{tabular}
\vphantom{A}}
\end{table*}

\subsection{Intervention Sparsity and Scale Stability}

A key property of \method{} is that it should remain sparse and stable as scale increases. Figure~\ref{fig:app-internal-sparsity-atlas} consolidates this evidence across four panels. Intervention coverage stays well below 1\% while the accuracy delta remains positive across 1k, 5k, 10k, and pooled settings~(Panel~A). B0 and M1 track each other closely as scale grows, indicating that the verifier does not produce unstable divergence at longer horizons~(Panel~B). The three seeded 10k runs remain consistently but modestly positive~(Panel~C). Panel~D compares ablations and shows that Y0 underperforms while M2 adds no meaningful gain over M1.

\begin{figure*}[htbp]
  \centering
  \includegraphics[width=\linewidth]{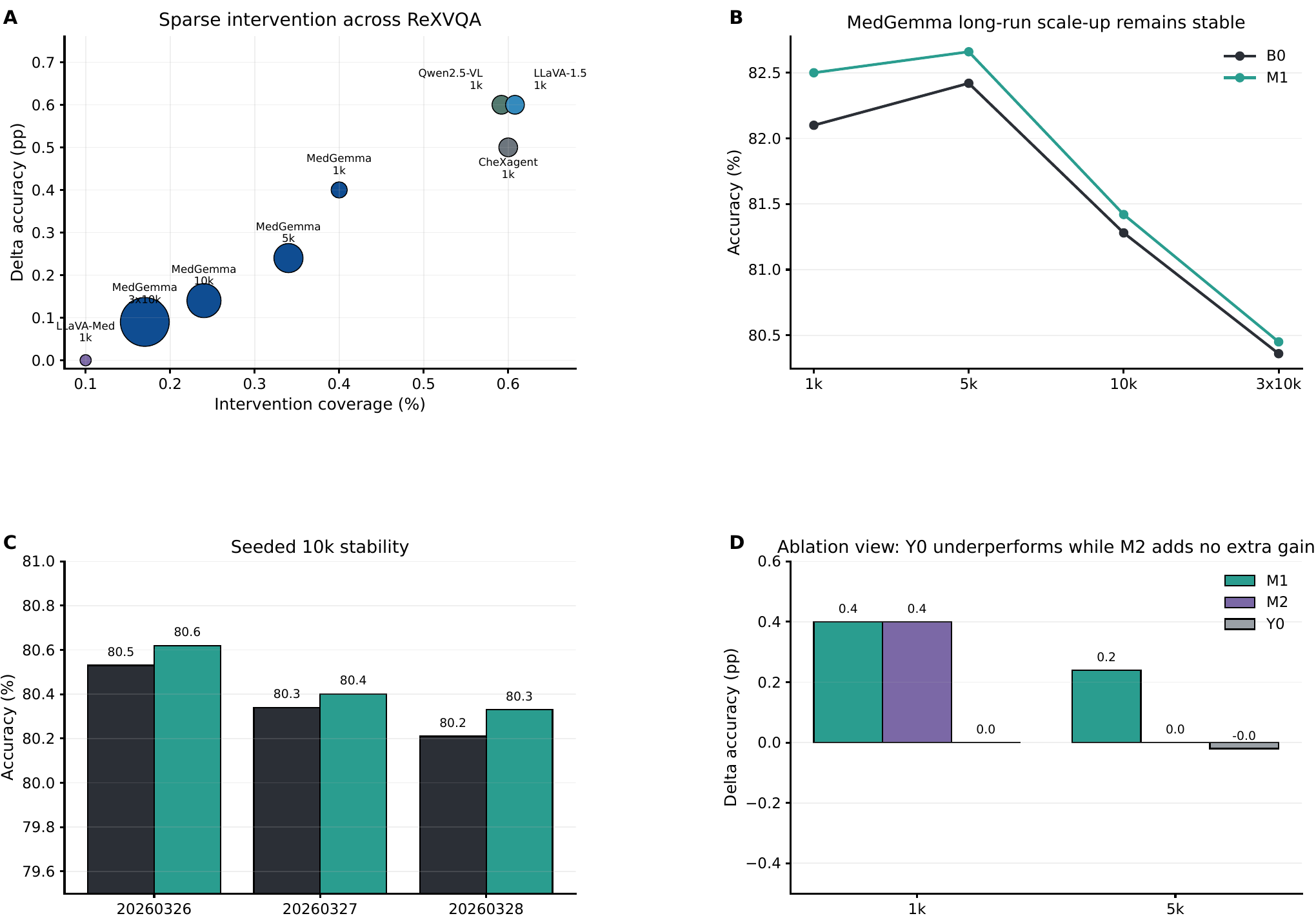}
  \caption{Internal sparsity atlas. The panels summarize intervention coverage, long-run MedGemma scale-up, seeded 10k stability, and ablation behavior for Y0, M1, and M2. The main pattern remains sparse intervention with small but positive aggregate gains.}
  \label{fig:app-internal-sparsity-atlas}
\end{figure*}

\subsection{Ablation Comparisons: Y0 and M2}

Table~\ref{tab:internal-ablations} isolates the two MedGemma-only ablations referenced in Section~\ref{sec:qccv}. Y0, the packet-confidence fallback, changes 74 predictions on the 5k slice, more than an order of magnitude above M1, yet slightly \emph{decreases} overall accuracy and leaves contradiction counts entirely unchanged, confirming that broad confidence-based overriding is not a substitute for targeted polarity correction. M2, the location scaffold, merely matches M1 on the 1k probe and offers no additional benefit. These results explain why the main paper foregrounds M1 rather than either ablation.

\begin{table}[h]
\caption{MedGemma-only internal ablations omitted from Table~\ref{tab:internal-results}. Y0 is the packet-only confidence fallback; M2 is the location scaffold evaluated only on the 1k probe. Neither changes the paper's main conclusion.}
\label{tab:internal-ablations}
\centering
\small
\resizebox{\linewidth}{!}{%
\begin{tabular}{llccccccc}
\toprule
Setting & Variant & Acc. & $\Delta$ pp & Changed & Improved & Worsened & Explicit-neg. $\Delta$ pp & Contradictions \\
\midrule
1k probe & M2 & 82.50 & +0.40 & 4 & 4 & 0 & +8.50 & 4 $\rightarrow$ 0 \\
5k slice & Y0 & 82.40 & -0.02 & 74 & 29 & 30 & -0.95 & 17 $\rightarrow$ 17 \\
\bottomrule
\end{tabular}
\vphantom{A}}
\end{table}

\subsection{Seeded Reproducibility}

To verify that the pooled 30k result is not a single-sample accident, Table~\ref{tab:seeded-stability} reports three independent seeded 10k runs. All three seeds show the same qualitative pattern: a small positive accuracy delta and complete removal of measured explicit-negation contradictions, with no change in the overall sign or significance of the effect.

\begin{table}[h]
\caption{Seeded ReXVQA 10k stability runs for B0 vs.\ M1. All three seeds show positive net gain and complete removal of measured explicit-negation contradictions.}
\label{tab:seeded-stability}
\centering
\small
\begin{tabular}{lcccc}
\toprule
Seed & B0 Acc. & M1 Acc. & $\Delta$ pp & Contradictions \\
\midrule
20260326 & 80.53 & 80.62 & +0.09 & 15 $\rightarrow$ 0 \\
20260327 & 80.34 & 80.40 & +0.06 & 15 $\rightarrow$ 0 \\
20260328 & 80.21 & 80.33 & +0.12 & 21 $\rightarrow$ 0 \\
\bottomrule
\end{tabular}
\end{table}

\section{Additional External Analyses}
\label{app:external}

This section supplements the main external results with five analyses: the full model-by-protocol result surface, a large-scale training-split replication, a retrospective OpenI audit, robustness stress tests under paraphrase and layout perturbation, and a structured error breakdown by finding category and case audit.

\subsection{Full External Result Matrix}

Table~\ref{tab:external-results-full} reports the full numerical matrix behind the main external summary in Figure~\ref{fig:external-results}. We use \emph{Base} to denote the source model output: B0 for non-CoT rows and CoT-B0 for CoT rows. We use \emph{Verified} to denote the corresponding deterministic correction: M1 for non-CoT rows and CoT+M1 for CoT rows. For CheXpert direct presence, the polarity-error column counts dangerous ``No $X$'' reversals; for the two absence protocols, it counts contradictions.

\begin{table*}[htbp]
\caption{Full cross-model external result matrix. \emph{Base} is B0 for non-CoT rows and CoT-B0 for CoT rows; \emph{Verified} is M1 for non-CoT rows and CoT+M1 for CoT rows. For CheXpert direct presence, polarity errors are dangerous ``No $X$'' reversals; for CheXpert and OpenI absence protocols, they are contradictions.}
\label{tab:external-results-full}
\centering
\scriptsize
\resizebox{\textwidth}{!}{%
\begin{tabular}{lcccccccccccc}
\toprule
\multirow{2}{*}{Variant} 
& \multicolumn{4}{c}{CheXpert direct presence ($n=235$)} 
& \multicolumn{4}{c}{CheXpert report-style absence ($n=507$)} 
& \multicolumn{4}{c}{OpenI report-style absence ($n=359$)} \\
\cmidrule(lr){2-5}\cmidrule(lr){6-9}\cmidrule(lr){10-13}
& Base & Verified & $\Delta$ pp & \shortstack{Errors\\Base $\rightarrow$ Verified}
& Base & Verified & $\Delta$ pp & \shortstack{Errors\\Base $\rightarrow$ Verified}
& Base & Verified & $\Delta$ pp & \shortstack{Errors\\Base $\rightarrow$ Verified} \\
\midrule
MedGemma 
& 31.49 & 96.60 & +65.11 & 153 $\rightarrow$ 0 
& 24.06 & 93.29 & +69.23 & 351 $\rightarrow$ 0 
& 70.47 & 86.91 & +16.44 & 79 $\rightarrow$ 0 \\

MedGemma-CoT 
& 42.98 & 94.89 & +51.91 & 122 $\rightarrow$ 0 
& 32.54 & 88.17 & +55.63 & 282 $\rightarrow$ 0 
& 45.68 & 78.55 & +32.87 & 177 $\rightarrow$ 0 \\

MedGemma-27B 
& 51.91 & 96.60 & +44.68 & 105 $\rightarrow$ 0 
& 31.16 & 97.24 & +66.08 & 335 $\rightarrow$ 0 
& 15.32 & 52.37 & +37.05 & 199 $\rightarrow$ 0 \\

LLaVA-1.5 
& 56.60 & 68.09 & +11.49 & 27 $\rightarrow$ 0 
& 27.42 & 96.06 & +68.64 & 348 $\rightarrow$ 0 
& 40.39 & 72.42 & +32.03 & 212 $\rightarrow$ 0 \\

LLaVA-Med 
& 74.04 & 74.04 & +0.00 & 0 $\rightarrow$ 0 
& 67.06 & 75.35 & +8.29 & 42 $\rightarrow$ 0 
& 49.03 & 49.58 & +0.55 & 4 $\rightarrow$ 0 \\

LLaVA-Med-CoT 
& 68.09 & 68.09 & +0.00 & 0 $\rightarrow$ 0 
& 11.83 & 99.80 & +87.97 & 446 $\rightarrow$ 0 
& 38.72 & 80.50 & +41.78 & 218 $\rightarrow$ 0 \\

CheXagent 
& 37.87 & 40.85 & +2.98 & 7 $\rightarrow$ 0 
& 6.51 & 43.20 & +36.69 & 186 $\rightarrow$ 0 
& 2.51 & 30.08 & +27.57 & 189 $\rightarrow$ 0 \\

Qwen2.5-VL 
& 30.21 & 95.32 & +65.11 & 153 $\rightarrow$ 0 
& 0.39 & 99.80 & +99.41 & 504 $\rightarrow$ 0 
& 12.53 & 64.35 & +51.82 & 311 $\rightarrow$ 0 \\

Qwen2.5-VL-CoT 
& 54.47 & 78.72 & +24.26 & 57 $\rightarrow$ 0 
& 27.22 & 99.80 & +72.58 & 368 $\rightarrow$ 0 
& 14.76 & 52.92 & +38.16 & 288 $\rightarrow$ 0 \\

GPT-4o 
& 62.55 & 73.19 & +10.64 & 25 $\rightarrow$ 0 
& 2.17 & 94.87 & +92.70 & 470 $\rightarrow$ 0 
& 16.16 & 63.79 & +47.63 & 284 $\rightarrow$ 0 \\
\bottomrule
\end{tabular}
\vphantom{A}}
\end{table*}

Figure~\ref{fig:app-external-full-matrix} provides the complementary heatmap view for the completed absence-side external protocols, where uncertainty estimates and all model-by-protocol cells are fully tabulated. Baseline accuracy varies widely across model families and protocols~(Panel~A), so a single-model anecdote would be misleading; Panel~B shows the corresponding verified-output surface. The largest gains occur where the baseline failure is strongest~(Panel~C), and contradiction rate is high across much of the external matrix~(Panel~D), confirming a systematic failure rather than a narrow outlier. Panel~E shows that the fraction of predictions changed by M1 tracks the failure surface rather than behaving like random rewriting.

\begin{figure*}[htbp]
  \centering
  \includegraphics[width=\linewidth]{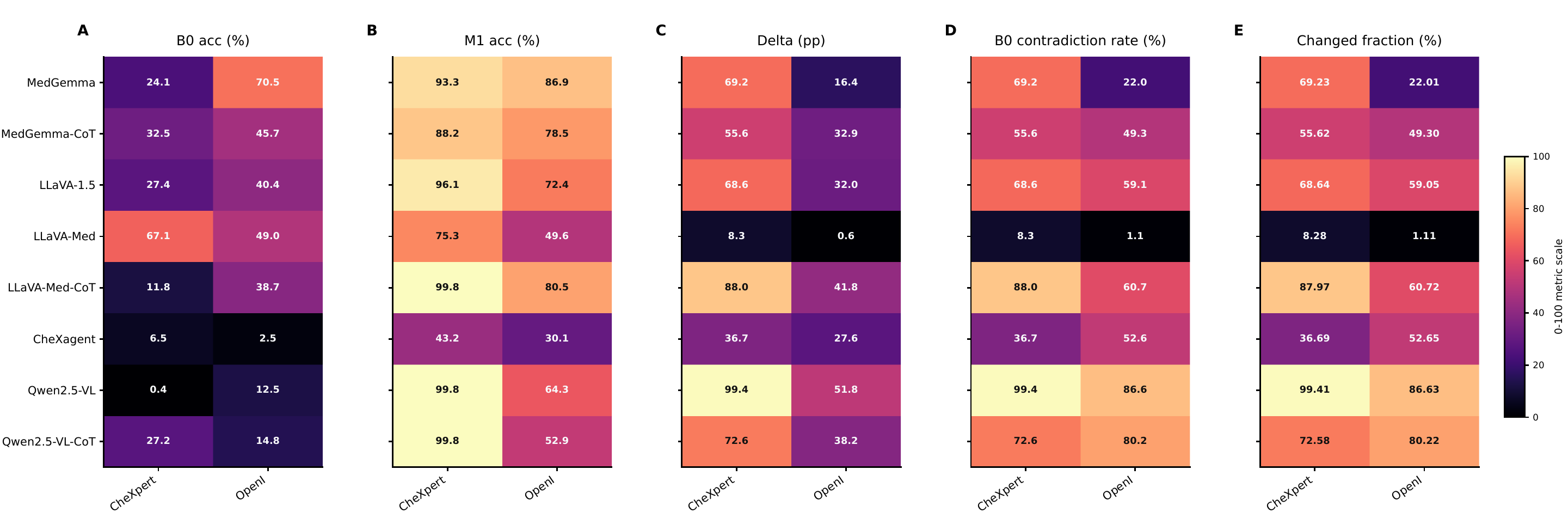}
  \caption{Full external result matrix across completed model and prompt variants on the absence-side CheXpert and OpenI protocols. Heatmaps report Base accuracy, Verified accuracy, accuracy delta, Base contradiction rate, and changed fraction.}
  \label{fig:app-external-full-matrix}
\end{figure*}

\subsection{Scale Replication on the CheXpert Training Split}

A potential concern is that the direct 235-record presence protocol and the 507-record absence validation protocol are both too small to rule out dataset-specific artifacts. We therefore replicate both analyses on matched CheXpert training-split protocols. The new direct presence replication yields 135{,}754 records from 59{,}173 studies, while the report-style absence replication yields 25{,}900 records from 15{,}119 studies. Table~\ref{tab:chexpert-train-scale} shows that the presence-side semantic reversal remains severe at scale: MedGemma and Qwen2.5-VL still emit 84{,}966 and 88{,}668 dangerous negated-option reversals, respectively, before deterministic repair. Figure~\ref{fig:app-scale-replication} retains the earlier absence-side visualization and still shows that the same answer-space mechanism remains strong on the larger report-style protocol.

\begin{figure*}[htbp]
  \centering
  \includegraphics[width=\linewidth]{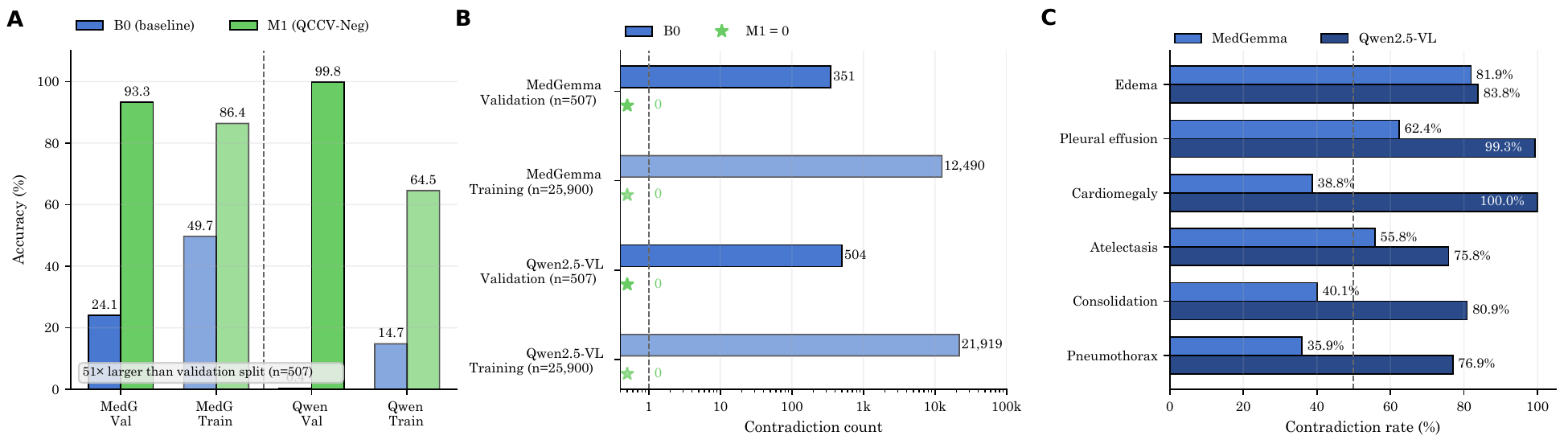}
  \caption{\textbf{Absence-side scale replication on the CheXpert training split.} \textbf{A}: B0 and M1 accuracy across validation ($n{=}507$) and training ($n{=}25{,}900$) splits for MedGemma and Qwen2.5-VL. The B0-to-M1 accuracy gain replicates at 51$\times$ scale. \textbf{B}: Contradiction counts on log scale. \method{} eliminates all 12{,}490 MedGemma and 21{,}919 Qwen contradictions on the training split, matching the zero-contradiction result on the validation split. \textbf{C}: Finding-level contradiction rates on the training split. The pathology pattern is consistent with the validation-split analysis in Table~\ref{tab:finding-breakdown}, confirming that the phenomenon is not a validation-set artifact.}
  \label{fig:app-scale-replication}
\end{figure*}

\begin{table}[h]
\caption{CheXpert training-split scale replication for both the new direct presence protocol and the report-style absence protocol. The direct presence replication is much larger and shows that dangerous ``No $X$'' reversals remain common at scale, while the absence-side replication preserves the earlier contradiction-heavy pattern.}
\label{tab:chexpert-train-scale}
\centering
\scriptsize
\resizebox{\linewidth}{!}{%
\begin{tabular}{llrrrrr}
\toprule
Protocol & Model & $n$ & B0 & M1 & ``No $X$''/Contrad. $B0 \rightarrow M1$ & Rate \\
\midrule
Direct presence val. & MedGemma & 235 & 31.49 & 96.60 & 153 $\rightarrow$ 0 & 65.11 \\
Direct presence train & MedGemma & 135{,}754 & 33.05 & 95.63 & 84{,}966 $\rightarrow$ 0 & 62.59 \\
Direct presence val. & Qwen2.5-VL & 235 & 30.21 & 95.32 & 153 $\rightarrow$ 0 & 65.11 \\
Direct presence train & Qwen2.5-VL & 135{,}754 & 30.47 & 95.78 & 88{,}668 $\rightarrow$ 0 & 65.32 \\
\midrule
Report-style absence val. & MedGemma & 507 & 24.06 & 93.29 & 351 $\rightarrow$ 0 & 69.23 \\
Report-style absence train & MedGemma & 25{,}900 & 49.67 & 86.37 & 12{,}490 $\rightarrow$ 0 & 48.22 \\
Report-style absence val. & Qwen2.5-VL & 507 & 0.39 & 99.80 & 504 $\rightarrow$ 0 & 99.41 \\
Report-style absence train & Qwen2.5-VL & 25{,}900 & 14.73 & 64.54 & 21{,}919 $\rightarrow$ 0 & 84.63 \\
\bottomrule
\end{tabular}
\vphantom{A}}
\end{table}

\subsection{Retrospective OpenI Audit}

Table~\ref{tab:openi-control} serves as a compact retrospective audit rather than a null control. On OpenI CheXStruct, MedGemma improves on both subsets, but the semantic interpretation differs. The absence subset retains the auditable contradiction pattern, while the presence subset reveals the clinically risky presence-side semantic reversal. Because both subsets share source distribution, task format, and evaluation pipeline, the comparison localizes the new gain to polarity handling rather than to any broad domain shift. The multimodel presence audit sharpens that picture: the negated-option wrong-answer fraction is 96.77\% for MedGemma, 92.58\% for Qwen2.5-VL, 41.61\% for LLaVA-1.5, and 21.19\% for LLaVA-Med, which again shows strong backbone dependence rather than a universal effect.

\begin{table}[h]
\caption{Retrospective OpenI CheXStruct audit. Upper block: MedGemma all/absence/presence split (B0 and M1). Here ``Wrong with No $X$'' counts all incorrect predictions whose selected option is the negated form; on the presence subset this quantity is broader than the narrower subset of repairable reversals. Lower block: multimodel presence-side negated-option wrong-answer rates on B0, confirming strong backbone dependence. Accuracy figures across the two blocks are not directly comparable due to different finding distributions.}
\label{tab:openi-control}
\centering
\small
\begin{tabular}{lcccc}
\toprule
Split & $n$ & B0 Acc. & M1 Acc. & Wrong with ``No $X$'' $B0 \rightarrow M1$ \\
\midrule
All & 1375 & 22.04 & 51.27 & 1054 $\rightarrow$ 501 \\
Absence & 979 & 19.00 & 39.94 & 784 $\rightarrow$ 428 \\
Presence & 396 & 29.55 & 79.29 & 270 $\rightarrow$ 73 \\
\bottomrule
\end{tabular}
\vspace{0.5em}

{\footnotesize\emph{Note:} On the MedGemma presence row, $270 \rightarrow 73$ counts all wrong ``No $X$'' predictions. The narrower repairable-reversal subset is $197$, which is exactly the number corrected by \method{} on this retrospective audit.}

\vspace{0.5em}

\begin{tabular}{lrrrr}
\toprule
Model & $n$ & B0 Acc.\ (\%) & Wrong ``No~$X$'' (B0) & Rate (\%) \\
\midrule
MedGemma & 396 & 29.55 & 270 & 96.77 \\
Qwen2.5-VL & 396 & 21.72 & 287 & 92.58 \\
LLaVA-1.5 & 396 & 62.37 & 62 & 41.61 \\
LLaVA-Med & 396 & 70.20 & 25 & 21.19 \\
\bottomrule
\end{tabular}
\end{table}

\subsection{Paraphrase and Layout Robustness}

The main text reports a compact CheXpert absence-side paraphrase summary in Table~\ref{tab:main-paraphrase-robustness}. Here we provide the full numerical breakdowns and the corresponding visual atlas for paraphrase and layout perturbations.

\paragraph{Paraphrase robustness.}
Figure~\ref{fig:contradictions} gives the main lexical-variation view. The broader non-canonical negation-paraphrase stress test shows that B0 polarity confusion persists beyond the literal ``No $X$'' form. The original literal-surface verifier loses coverage on several paraphrases, whereas a paraphrase-aware deterministic extension restores the measurable subset with no worsened cases in this experiment.

\begin{table}[h]
\caption{Full paraphrase-robustness breakdown on CheXpert report-style negation. The original M1 trigger misses P1 and P3, whereas the extended-trigger verifier removes all measured contradictions on original, P1, P2, and P3 with worsened = 0 throughout.}
\label{tab:paraphrase-robustness-full}
\centering
\scriptsize
\resizebox{\linewidth}{!}{%
\begin{tabular}{lcccccccc}
\toprule
Variant & $n$ & B0 Acc. & B0 Contrad. & Orig. M1 Acc. & Orig. M1 Contrad. & Ext. M1 Acc. & Ext. M1 Contrad. & Ext. M1 Worsened \\
\midrule
Original & 507 & 24.06 & 351 & 93.29 & 0 & 93.29 & 0 & 0 \\
P1 & 507 & 22.68 & 283 & 22.68 & 283 & 78.50 & 0 & 0 \\
P2 & 507 & 18.93 & 381 & 94.08 & 0 & 94.08 & 0 & 0 \\
P3 & 507 & 24.65 & 379 & 24.65 & 379 & 99.41 & 0 & 0 \\
\bottomrule
\end{tabular}
\vphantom{A}}
\end{table}
\paragraph{Layout robustness.}
The original protocol places the target concept in a fixed answer slot. To test whether M1 depends on this layout, we generate shuffled-order variants of the CheXpert protocol and compare slot-0 M1 against a slot-agnostic variant that matches the positive concept by value rather than by position. Table~\ref{tab:layout-robustness} shows that on the original layout, slot-agnostic M1 exactly matches slot-0 M1 for both MedGemma and Qwen2.5-VL, so the generalized rule introduces no regression. On shuffled layouts, however, slot-0 M1 accuracy drops to 84.42\% (MedGemma) and 67.06\% (Qwen2.5-VL), while slot-agnostic M1 remains at 97.44\% and 99.41\% with contradictions still at zero. The slot-agnostic rule therefore closes the fixed-slot loophole without altering the canonical correction behavior.

\begin{table}[h]
\caption{Layout robustness on CheXpert report-style negation. Slot-agnostic M1 matches slot-0 exactly on the original layout for MedGemma and Qwen, but remains substantially stronger after shuffling the option order.}
\label{tab:layout-robustness}
\centering
\scriptsize
\resizebox{\linewidth}{!}{%
\begin{tabular}{llccccccc}
\toprule
Layout & Model & $n$ & B0 Acc. & B0 Contrad. & Slot-0 M1 Acc. & Slot-0 M1 Contrad. & Slot-agnostic M1 Acc. & Slot-agnostic M1 Contrad. \\
\midrule
Original & MedGemma & 507 & 24.06 & 351 & 93.29 & 0 & 93.29 & 0 \\
Shuffled & MedGemma & 507 & 46.94 & 256 & 84.42 & 0 & 97.44 & 0 \\
Original & Qwen & 507 & 0.39 & 504 & 99.80 & 0 & 99.80 & 0 \\
Shuffled & Qwen & 507 & 16.57 & 420 & 67.06 & 0 & 99.41 & 0 \\
\bottomrule
\end{tabular}
\vphantom{A}}
\end{table}

Figure~\ref{fig:app-robustness-atlas} visualizes both robustness analyses jointly. The top row reports accuracy and contradiction behavior under paraphrase variants with the original and extended triggers; the bottom row reports the layout comparison. Both robustness issues are real, but they are addressed by different mechanism changes, namely extended trigger coverage for paraphrase and slot-agnostic replacement for layout, without interfering with each other.

\begin{figure*}[htbp]
  \centering
  \includegraphics[width=\linewidth]{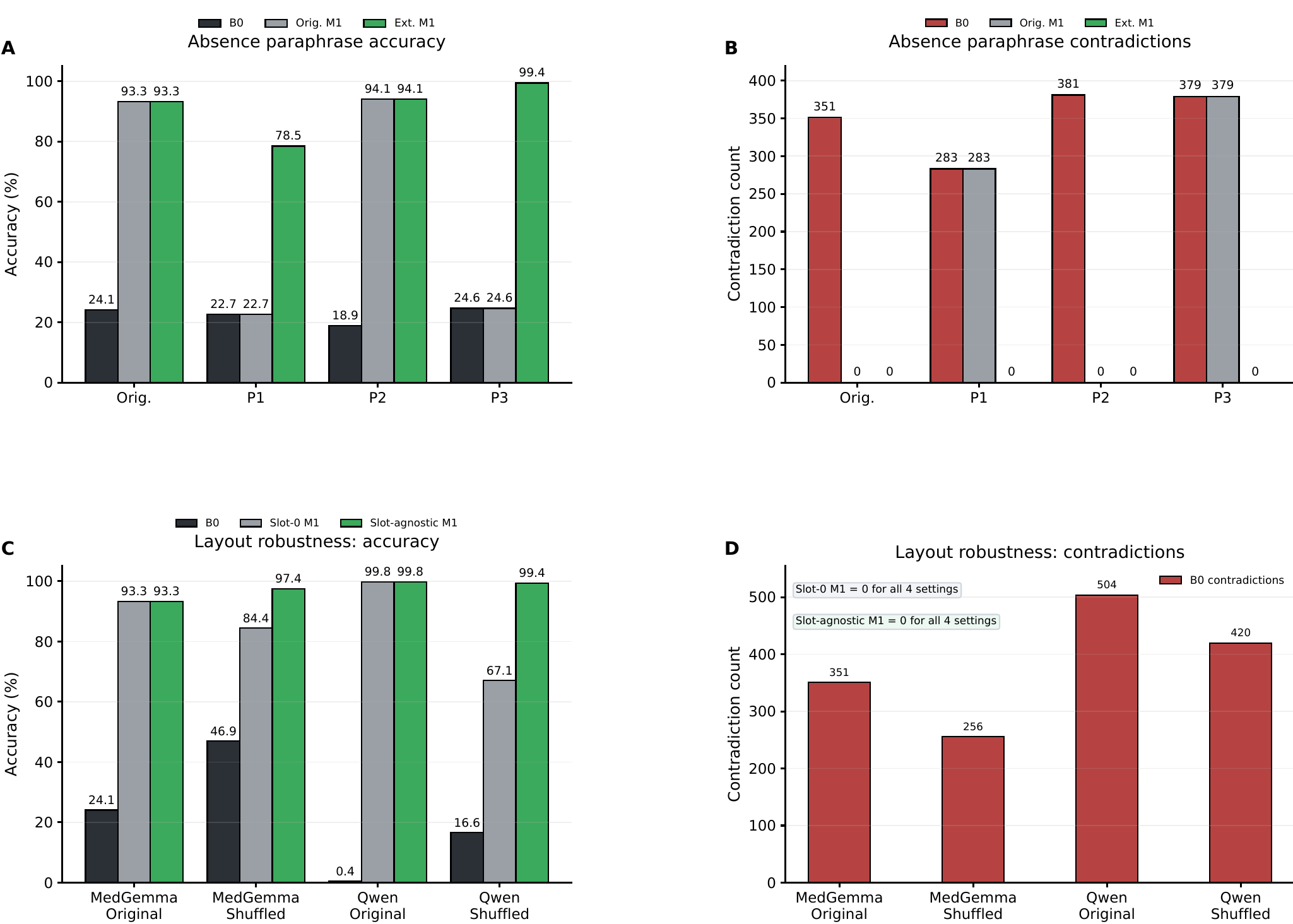}
  \caption{Robustness atlas for paraphrase and layout perturbations. The top row reports accuracy and contradiction behavior under paraphrase variants with the original and extended triggers. The bottom row reports accuracy and contradiction behavior under original and shuffled option layouts with slot-0 and slot-agnostic M1.}
  \label{fig:app-robustness-atlas}
\end{figure*}

\subsection{Finding-Level and Error-Audit Breakdown}

The direct presence probe and the absence-side protocols both show meaningful finding-level heterogeneity. Table~\ref{tab:presence-finding-breakdown} gives the five open-weight backbones with complete per-finding direct presence-side ``No $X$'' reversal rates; aggregate large-backbone results for MedGemma-27B and GPT-4o are reported in Appendix~\ref{app:large_models}. Within the five-backbone breakdown, pneumothorax has the highest average reversal rate and consolidation shows one of the sharpest cross-model spreads. Table~\ref{tab:finding-breakdown} then reports the complementary absence-side contradiction summary aggregated over the eight model and prompt variants in Figure~\ref{fig:external-results}. There, edema has the highest average contradiction rate (70.56\%), followed by consolidation (64.75\%) and pleural effusion (64.52\%). The benchmark therefore captures clinically meaningful heterogeneity rather than a single universal ordering.

\begin{table}[h]
\caption{Direct CheXpert presence-probe negated-option reversal rates on B0 by target finding and model. Rates are percentages within each finding-specific subset, and 0 values indicate that the model did not emit the negated option for that finding on this protocol.}
\label{tab:presence-finding-breakdown}
\centering
\scriptsize
\resizebox{\linewidth}{!}{%
\begin{tabular}{lrrrrrr}
\toprule
Model & Atelectasis & Cardiomegaly & Consolidation & Edema & Pleural effusion & Pneumothorax \\
\midrule
CheXagent & 1.61 & 0.00 & 3.23 & 0.00 & 8.93 & 0.00 \\
LLaVA-1.5 & 6.45 & 27.45 & 0.00 & 24.24 & 1.79 & 0.00 \\
LLaVA-Med & 0.00 & 0.00 & 0.00 & 0.00 & 0.00 & 0.00 \\
MedGemma & 11.29 & 80.39 & 100.00 & 78.79 & 82.14 & 100.00 \\
Qwen2.5-VL & 58.06 & 64.71 & 41.94 & 72.73 & 76.79 & 100.00 \\
\bottomrule
\end{tabular}}
\end{table}

\begin{table}[h]
\caption{CheXpert finding-level contradiction rates on B0, aggregated over the eight model/prompt variants in Figure~\ref{fig:external-results}. The ranking is not uniform across models, but edema is the highest-rate category on average.}
\label{tab:finding-breakdown}
\centering
\small
\begin{tabular}{lrrrr}
\toprule
Finding & $n$ & Mean rate & Min & Max \\
\midrule
Pneumothorax & 126 & 56.65 & 0.00 & 99.21 \\
Consolidation & 100 & 64.75 & 10.00 & 100.00 \\
Edema & 90 & 70.56 & 0.00 & 100.00 \\
Pleural effusion & 68 & 64.52 & 5.88 & 100.00 \\
Cardiomegaly & 66 & 57.95 & 3.03 & 100.00 \\
Atelectasis & 57 & 59.87 & 26.32 & 100.00 \\
\bottomrule
\end{tabular}
\end{table}

Figure~\ref{fig:app-finding-error-atlas} presents the full structured error analysis across both direct presence probes and absence-side protocols. The model-by-finding heatmap~(Panel~A) shows that contradiction burden clusters along both model and finding dimensions. Panel~B provides a count-based summary of dangerous negated-option errors by finding. Panel~C reports the composition of a 100-case manual audit; negation-related error categories dominate, but the slice is not monolithic. Panel~D maps audit categories onto deterministic fixability, explaining why a lightweight verifier can resolve many but not all observed failures.

\begin{figure*}[htbp]
  \centering
  \includegraphics[width=\linewidth]{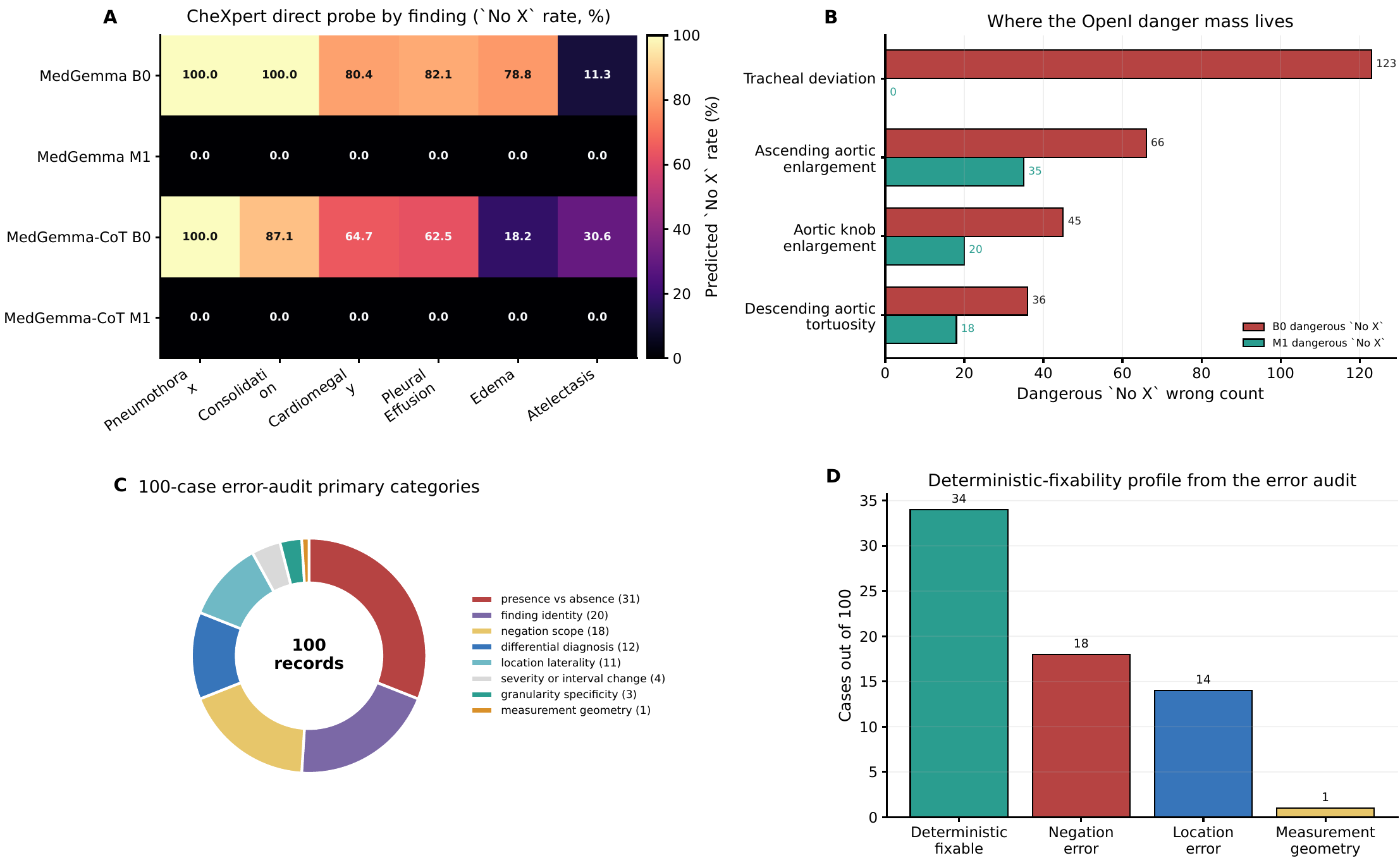}
  \caption{Finding and error atlas. The panels show model-by-finding contradiction heatmaps, contradiction statistics by finding (including both rates and counts), the 100-case error-audit composition, and the deterministic-fixability profile induced by the audit labels.}
  \label{fig:app-finding-error-atlas}
\end{figure*}

\subsection{Public Case Gallery}

Figure~\ref{fig:app-casebook-gallery} provides a cue-first public casebook of representative OpenI cases selected to illustrate the benchmark's repair behavior. The figure groups cases into presence-side repairs and absence-side repairs. In these cards, report-side cue evidence is foregrounded, the polarity mapping is stated explicitly, and the original benchmark item is shown only for reference.

\begin{figure*}[htbp]
  \centering
  \includegraphics[width=\linewidth]{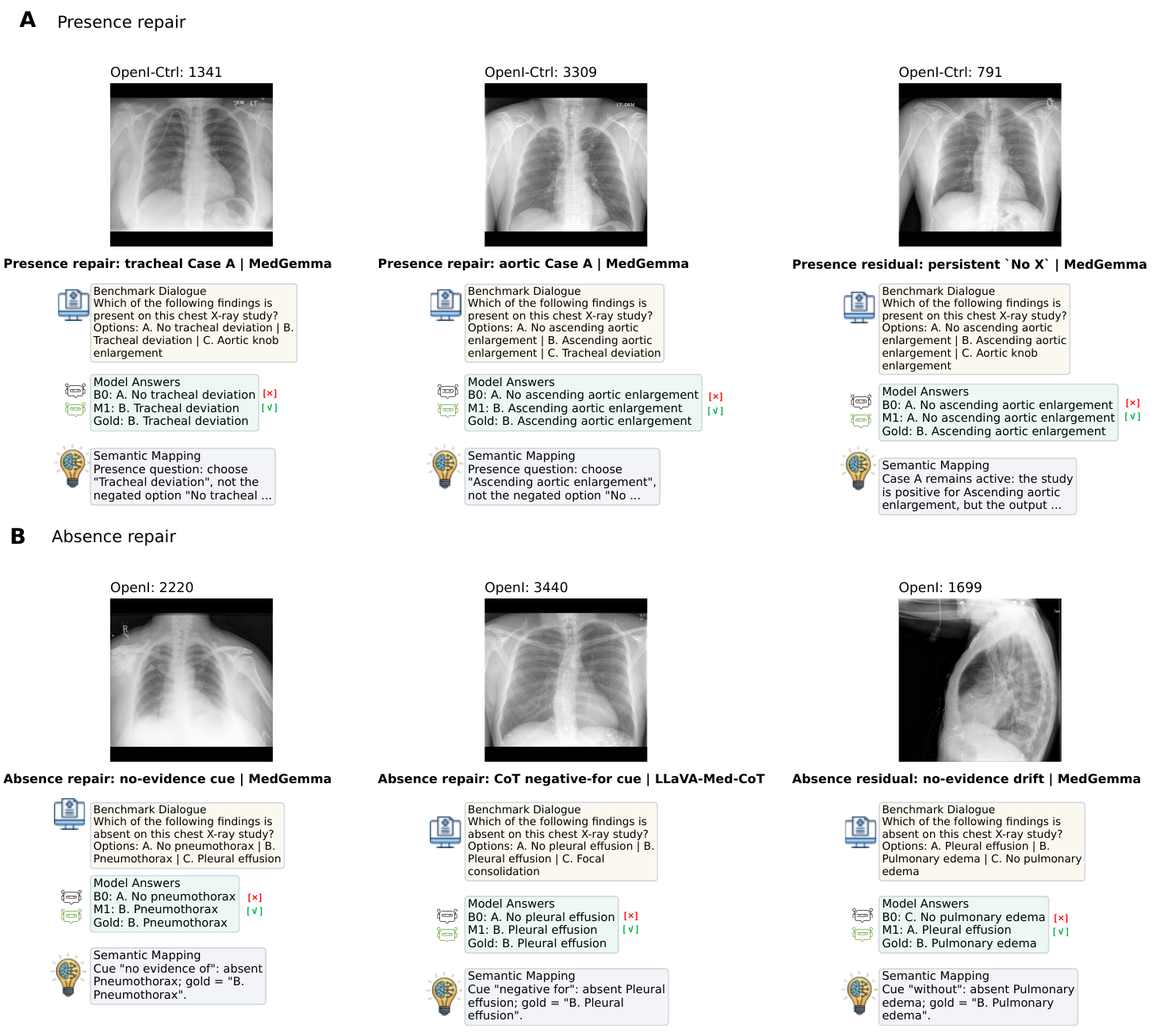}
  \caption{Public report-negation casebook with deterministic OpenI case selection. Each card foregrounds report-side cue evidence, a compact polarity-mapping summary, and raw B0/M1/Gold options, while the original benchmark item is shown only as a reference.}
  \label{fig:app-casebook-gallery}
\end{figure*}

\section{Large-Backbone Evaluation: MedGemma-27B and GPT-4o}
\label{app:large_models}

We provide extended analysis of MedGemma-27B-it and GPT-4o across all three external evaluation settings, including the direct presence probe, the matched positive-only control, and the absence protocols. All evaluations use identical prompting and
\method{} post-hoc verification where applicable. MedGemma-27B-it is evaluated
locally with \texttt{device\_map="auto"} across four GPUs and
\texttt{bfloat16} precision; \texttt{max\_new\_tokens} is set to
64 to accommodate the model's longer generation style (the
default 16 used for 4B caused severe answer truncation, yielding
spuriously low accuracy). GPT-4o is evaluated via API with
temperature 0 and a single-image-per-study constraint matching the
other backbones. For direct presence we report B0 and M1; for the
absence protocols we additionally report B0-Instruct (polarity
instruction prepended). Parse-failure rates are
additionally reported for MedGemma-27B to distinguish genuine
accuracy from truncation artifacts.

\subsection{Direct Presence Extension and Matched Positive-Only Control}

Extending the direct CheXpert presence probe to the larger models
shows that stronger models are less failure-prone on this construction,
but not exempt from presence-side semantic reversal. On the full
235-record probe, MedGemma-27B reaches 51.91\% B0 accuracy with
105 dangerous negated-option reversals; \method{} repairs all of them
and raises accuracy to 96.60\%. GPT-4o starts higher at 62.55\% B0 but
still emits 25 reversals, improving to 73.19\% after deterministic
repair. The failure therefore persists beyond the smaller open-weight
backbones emphasized in the main text, although its magnitude becomes
substantially more backbone-dependent.

Table~\ref{tab:presence_positive_control} reports the matched
210-record positive-only control. Removing the negated option increases
accuracy for MedGemma-4B (+26.67~pp) and MedGemma-27B (+10.00~pp),
consistent with direct attraction to the negated surface form. By
contrast, Qwen2.5-VL also improves on the control (+16.19~pp), while
GPT-4o performs worse (-16.67~pp). The control therefore sharpens causal
interpretation in a more selective way: negated-option attraction is a
major driver for MedGemma-4B, Qwen2.5-VL, and MedGemma-27B, whereas
GPT-4o retains broader present-finding selection errors and may use the
negated option as an exclusion anchor during answer elimination.

\begin{table}[h]
\caption{Matched positive-only control for the direct CheXpert presence setting. The comparison uses the 210-record subset for which a paired control item exists, so the direct-presence accuracy in this table differs slightly from the full 235-record direct presence results in Figure~\ref{fig:external-results}.}
\label{tab:presence_positive_control}
\centering
\small
\begin{tabular}{lrrrrrr}
\toprule
Model & $n$ & Direct-presence acc. & Control acc. & Direct ``No $X$'' & Direct ``No $X$'' rate \\
\midrule
MedGemma-4B & 210 & 30.00 & 56.67 & +26.67 & 139 & 66.19 \\
Qwen2.5-VL & 210 & 30.00 & 46.19 & +16.19 & 136 & 64.76 \\
MedGemma-27B & 210 & 50.95 & 60.95 & +10.00 & 95 & 45.24 \\
GPT-4o & 210 & 62.38 & 45.71 & -16.67 & 20 & 9.52 \\
\bottomrule
\end{tabular}
\end{table}

\subsection{Scale Does Not Uniformly Reduce Negation Attraction}

Table~\ref{tab:large_model_results} summarizes the absence-side results
across all four models and two protocols. Several findings stand out.

\paragraph{MedGemma 4B~$\to$~27B.}
On CheXpert, MedGemma-27B B0 accuracy (31.16\%) is comparable to
MedGemma-4B (24.06\%), with a similar contradiction burden (335
vs.\ 351). On OpenI, however, 27B B0 accuracy (15.32\%) falls well
below 4B (70.47\%), indicating that scale does not consistently
reduce negated-option attraction and can worsen it on some
distributions. The instruction variant for 27B substantially
reduces contradictions (PARTIAL classification on both protocols),
unlike 4B where instruction amplifies contradictions (BACKFIRE on
both). \method{} eliminates all measured contradictions for 27B
on CheXpert (M1 97.24\%) but provides more modest recovery on
OpenI (M1 52.37\%), likely because the higher parse-failure rate
(6.69\%) limits the number of correctable negation predictions.

\paragraph{GPT-4o.}
GPT-4o shows the most severe B0 negation attraction of any tested
backbone: 470 contradictions on CheXpert (B0 accuracy 2.17\%) and
284 on OpenI (16.16\%), indicating that high general capability
does not prevent polarity confusion in this answer-space regime.
Instruction prompting is substantially more effective for GPT-4o
than for MedGemma: accuracy rises to 57.20\% on CheXpert and
67.97\% on OpenI, with classification PARTIAL on both. Combined
Instruct+M1 reaches 83.83\% on CheXpert and 68.25\% on OpenI.
\method{} applied to plain B0 eliminates all contradictions on
both protocols (M1 94.87\% and 63.79\%, respectively).

\paragraph{M1 universality.}
Across all four B0 runs in this extended evaluation (two models
$\times$ two protocols $\times$ B0), \method{} eliminates every
measured contradiction and yields a positive accuracy delta in
every case, consistent with the main-paper results for smaller
backbones. This confirms that the deterministic four-condition
trigger generalizes across model families and scales without
modification.

\begin{table*}[htbp]
\caption{Large-backbone results on CheXpert and OpenI report-style
absence protocols. Parse-failure rates are reported for MedGemma-27B
to distinguish truncation artifacts from genuine errors.
Instruct classification: BACKFIRE = instruction increases
contradictions; PARTIAL = instruction reduces but does not
eliminate contradictions; FULL\_SOLVE = instruction eliminates
contradictions. All M1 results apply \method{} to B0 outputs.}
\label{tab:large_model_results}
\centering
\scriptsize
\resizebox{\textwidth}{!}{%
\begin{tabular}{llrrrrrrrr}
\toprule
Model & Protocol & $n$ &
  B0 Acc. & B0 Contra. &
  Instruct Acc. & Instruct Contra. & Instruct Class. &
  M1 Acc. & M1 Contra. \\
\midrule
MedGemma-4B & CheXpert & 507 &
  24.06 & 351 & 12.23 & 406 & BACKFIRE & 93.29 & 0 \\
MedGemma-4B & OpenI & 359 &
  70.47 & 79  & 49.03 & 162 & BACKFIRE & 86.91 & 0 \\
\midrule
MedGemma-27B$^\dagger$ & CheXpert & 507 &
  31.16 & 335 & 77.32 & 18 & PARTIAL  & 97.24 & 0 \\
MedGemma-27B$^\dagger$ & OpenI & 359 &
  15.32 & 199 & 43.73 & 38 & PARTIAL  & 52.37 & 0 \\
\midrule
GPT-4o & CheXpert & 507 &
  2.17 & 470 & 57.20 & 135 & PARTIAL & 94.87 & 0 \\
GPT-4o & OpenI & 359 &
  16.16 & 284 & 67.97 & 1   & PARTIAL & 63.79 & 0 \\
\bottomrule
\end{tabular}}
\vspace{0.3em}

{\footnotesize $^\dagger$ MedGemma-27B parse-failure rates:
CheXpert B0 0.79\%, CheXpert Instruct 12.03\%,
OpenI B0 6.69\%, OpenI Instruct 12.26\%.
All MedGemma-27B contradiction counts are reported because the
corresponding parse-failure rates remain below 15\%.}
\end{table*}

\subsection{Implications}

These results extend three conclusions from the main paper.
First, negated-option attraction is not resolved by scaling:
MedGemma-27B and GPT-4o both show substantial B0 contradiction
rates, and GPT-4o's B0 contradiction rate is the highest of any
backbone evaluated. Second, instruction sensitivity is
backbone-dependent: GPT-4o responds strongly to polarity
instruction, while MedGemma-4B is harmed by it (BACKFIRE), and
MedGemma-27B shows an intermediate response (PARTIAL). Third,
\method{} remains effective across all tested scales and
architectures, eliminating contradictions without requiring
model-specific adaptation. The parse-failure issue observed for
MedGemma-27B highlights that evaluation infrastructure designed
for short-answer models requires adaptation when applied to
larger models with longer generation style.

\section{Few-Shot In-Context Learning Baseline}
\label{app:fewshot}

We evaluate three-shot in-context learning for MedGemma-4B and Qwen2.5-VL on the 235-record direct CheXpert presence probe and the 507-record CheXpert report-style absence protocol. The intended construction was a held-out CheXpert validation pool restricted to studies outside the evaluation protocols. In practice, once the 82 direct-presence and 132 absence-side evaluation studies are excluded, the CheXpert validation split contains no constructible held-out examples under the benchmark rules. The final few-shot pools are therefore built from non-overlapping OpenI sources: 20 CheXStruct presence examples for the presence probe and 20 OpenI report-style absence examples for the absence probe, with three examples sampled deterministically per test call (seed 42) and at least two findings when possible.

This setup is intentionally conservative for the main claim. The exemplars are polarity-correct and non-overlapping, but they are cross-dataset rather than within-CheXpert, so any gain cannot be attributed to same-distribution memorization. The resulting table therefore tests whether lightweight prompt-only exemplar conditioning can suppress the answer-space polarity error without retraining, even when the exemplars come from a different source distribution.

\begin{table*}[htbp]
\caption{Few-shot in-context learning comparison on the two CheXpert protocols for MedGemma-4B and Qwen2.5-VL. `neg\_opt\_count` denotes direct-presence ``No $X$'' predictions or absence-side contradictions, and `changed`/`worsened` are measured against the paired FS+M1 application. Baseline and CoT reference rows remain in Figure~\ref{fig:external-results}; this appendix table isolates the new few-shot intervention.}
\label{tab:fewshot_comparison}
\centering
\scriptsize
\resizebox{\textwidth}{!}{%
\begin{tabular}{lllrrrr}
\toprule
Model & Protocol & Variant & Accuracy & neg\_opt\_count & changed & worsened \\
\midrule
MedGemma-4B-it & CheXpert presence & FS-B0 & 29.36 & 160 & 160 & 0 \\
MedGemma-4B-it & CheXpert presence & FS+M1 & 97.45 & 0 & 160 & 0 \\
MedGemma-4B-it & CheXpert absence & FS-B0 & 2.17 & 486 & 486 & 0 \\
MedGemma-4B-it & CheXpert absence & FS+M1 & 98.03 & 0 & 486 & 0 \\
Qwen2.5-VL-7B-Instruct & CheXpert presence & FS-B0 & 29.79 & 150 & 150 & 0 \\
Qwen2.5-VL-7B-Instruct & CheXpert presence & FS+M1 & 93.62 & 0 & 150 & 0 \\
Qwen2.5-VL-7B-Instruct & CheXpert absence & FS-B0 & 28.40 & 354 & 354 & 0 \\
Qwen2.5-VL-7B-Instruct & CheXpert absence & FS+M1 & 98.22 & 0 & 354 & 0 \\
\bottomrule
\end{tabular}}
\end{table*}

Three conclusions follow. First, few-shot ICL is not a stable direct-presence remedy: it slightly worsens raw accuracy for both MedGemma and Qwen and leaves essentially the full dangerous reversal burden intact. Second, on the absence-side it is highly backbone-dependent, catastrophically harming MedGemma while helping Qwen, so prompt-only exemplars do not provide a predictable intervention. Third, FS+M1 still eliminates the measured polarity-confused subset in all four settings, indicating that the deterministic verifier remains the only consistently effective intervention among the prompt-level baselines tested here.

\section{In-Format LoRA Fine-tuning Reference}
\label{app:lora}

As a training-time reference, we fine-tune MedGemma-4B-it and Qwen2.5-VL-7B-Instruct with LoRA (rank 16) on a balanced 4{,}000-example CheXpert training-split set containing 2{,}000 presence questions and 2{,}000 absence questions. The training set uses the same answer-space format and question templates as the evaluation protocols, with zero study overlap from all validation evaluation studies. This experiment is therefore intended to test whether explicit in-format supervision can suppress the measured polarity failure, not whether fine-tuning yields transferable visual negation understanding. LoRA+M1 denotes M1 applied post hoc to LoRA-FT predictions.

\begin{table}[h]
\caption{Comparison of prompt-level and training-level baselines on the direct CheXpert presence probe ($n{=}235$) and report-style absence protocol ($n{=}507$). \textit{neg\_opt} denotes dangerous ``No $X$'' reversals (presence) or contradictions (absence). LoRA-FT and LoRA+M1 are evaluated on the validation split; training used only CheXpert training-split studies.}
\label{tab:lora_comparison}
\centering
\scriptsize
\begin{tabular}{llrr}
\toprule
Model & Variant & Presence acc. / neg\_opt & Absence acc. / neg\_opt \\
\midrule
MedGemma-4B & B0        & 31.49 / 153 & 23.08 / 356 \\
             & CoT-B0    & 44.26 / 110 & 33.14 / 285 \\
             & FS-B0     & 29.36 / 160 & 2.17 / 486  \\
             & M1        & 96.60 / 0   & 93.29 / 0   \\
             & LoRA-FT   & 100.00 / 0  & 100.00 / 0  \\
             & LoRA+M1   & 100.00 / 0  & 100.00 / 0  \\
\midrule
Qwen2.5-VL  & B0        & 30.21 / 153 & 0.20 / 505  \\
             & CoT-B0    & 56.60 / 58  & 27.22 / 368 \\
             & FS-B0     & 29.79 / 150 & 28.40 / 354 \\
             & M1        & 95.32 / 0   & 99.80 / 0   \\
             & LoRA-FT   & 100.00 / 0  & 100.00 / 0  \\
             & LoRA+M1   & 100.00 / 0  & 100.00 / 0  \\
\bottomrule
\end{tabular}
\end{table}

LoRA-FT removes the measured polarity-confused outputs in all four model-protocol settings, and LoRA+M1 yields the same zero-error measured subset. This result should be interpreted narrowly. It shows that explicit in-format supervised adaptation can suppress the benchmarked answer-space failure, but it does not by itself establish transferable visual polarity understanding.

\paragraph{Format-specific adaptation.}
The final training loss is near zero for both models (MedGemma $1.2 \times 10^{-5}$; Qwen $3 \times 10^{-6}$), indicating that the models fit the 4{,}000-example training set almost completely. Because training and evaluation share the same question templates and answer-space layout, a model can solve the measured task by learning a surface policy such as avoiding options beginning with ``No'' for presence questions and choosing the concept name for absence questions. The zero study overlap rules out direct image-level leakage, but it does not rule out format-specific adaptation. Therefore, this experiment is best viewed as an in-format supervised reference rather than evidence that the model has acquired robust clinical negation reasoning.

\paragraph{Scope relative to \method{}.}
This comparison also clarifies the role of \method{}. Within the explicit-negation regime measured by CXR-ContraBench, a deterministic post-hoc verifier with no training data and no extra model calls reaches within $3$--$7$~pp of the in-format LoRA reference on the validation protocols, while remaining transparent about exactly which predictions it changes. LoRA-FT is stronger under matched supervision, but it requires labeled in-domain examples and may learn the benchmark format. \method{} is therefore not positioned as a replacement for supervised adaptation; it is a training-free audit and repair mechanism for the measured polarity-confused subset.

\paragraph{Boundary under broader negation.}
The paraphrase stress test further sharpens this scope. The measured failure persists beyond the literal ``No $X$'' template, while the original literal-surface verifier loses coverage on many non-canonical forms. A paraphrase-aware deterministic extension restores the measurable subset without observed regressions, but implicit or compositional negation remains outside the scope of the current verifier. Whether broader fine-tuning would learn transferable semantic polarity rather than a wider family of surface rules remains an open question.

\section{Negation Paraphrase Stress Test}
\label{app:paraphrase}

We next test the strongest surface-form objection to CXR-ContraBench: that the measured failure may be tied only to the literal option form ``No $X$'' and may therefore overestimate a template artifact. To isolate that question, we keep the underlying images, question polarity, distractor structure, and scoring protocol fixed, but rewrite the negated option using four non-canonical variants: ``Absence of $X$,'' ``$X$ is not present,'' ``No evidence of $X$,'' and ``Clear of $X$.'' We evaluate the direct CheXpert presence probe and the CheXpert report-style absence protocol under three systems: baseline B0, the original literal-surface verifier (Orig-M1), and a paraphrase-aware deterministic extension (Ext-M1). Ext-M1 preserves the same intervention rule as the canonical verifier---one matched negated option, baseline predicts it, polarity is safely identified, and a unique positive counterpart exists---but expands the hand-written negation matcher to the audited paraphrase set. No additional model call, learned classifier, or search procedure is introduced.

Table~\ref{tab:paraphrase_presence} shows the presence-side results. The key pattern is that B0 polarity confusion persists under every non-canonical rewrite for both MedGemma-4B-it and Qwen2.5-VL-7B-Instruct. Across the eight non-canonical presence runs, the original literal-surface verifier loses all coverage and therefore leaves the B0 reversals unchanged, while Ext-M1 restores the measurable subset in every run and reduces dangerous reversals to zero throughout. The strongest B0 failures occur for MedGemma under ``No evidence of $X$'' (215 reversals) and ``Clear of $X$'' (214 reversals), and for Qwen under ``$X$ is not present'' (188 reversals). Canonical ``No $X$'' results match the original main-text numbers exactly.

\begin{table*}[t]
\caption{Direct-presence negation-paraphrase stress test on the 235-record CheXpert presence probe. Counts are presence-side semantic reversals. Orig-M1 is the original literal-surface verifier; Ext-M1 is the paraphrase-aware deterministic extension.}
\label{tab:paraphrase_presence}
\centering
\scriptsize
\resizebox{\textwidth}{!}{%
\begin{tabular}{llrrrrrrrrrr}
\toprule
Model & Variant & $n$ & B0 acc & B0 rev. & Orig-M1 acc & Orig-M1 rev. & Ext-M1 acc & Ext-M1 rev. & Orig-M1 worsened & Ext-M1 worsened \\
\midrule
MedGemma-4B-it & canonical\_no   & 235 & 31.49 & 153 & 96.60 &   0 & 96.60 &   0 & 0 & 0 \\
MedGemma-4B-it & absence\_of     & 235 & 21.70 & 178 & 21.70 & 178 & 97.45 &   0 & 0 & 0 \\
MedGemma-4B-it & not\_present    & 235 & 39.15 & 124 & 39.15 & 124 & 91.91 &   0 & 0 & 0 \\
MedGemma-4B-it & no\_evidence\_of& 235 &  7.66 & 215 &  7.66 & 215 & 99.15 &   0 & 0 & 0 \\
MedGemma-4B-it & clear\_of       & 235 &  8.09 & 214 &  8.09 & 214 & 99.15 &   0 & 0 & 0 \\
Qwen2.5-VL-7B-Instruct & canonical\_no    & 235 & 30.21 & 153 & 95.32 &   0 & 95.32 &   0 & 0 & 0 \\
Qwen2.5-VL-7B-Instruct & absence\_of      & 235 & 33.19 & 140 & 33.19 & 140 & 92.77 &   0 & 0 & 0 \\
Qwen2.5-VL-7B-Instruct & not\_present     & 235 & 19.57 & 188 & 19.57 & 188 & 99.57 &   0 & 0 & 0 \\
Qwen2.5-VL-7B-Instruct & no\_evidence\_of & 235 & 25.96 & 168 & 25.96 & 168 & 97.45 &   0 & 0 & 0 \\
Qwen2.5-VL-7B-Instruct & clear\_of        & 235 & 40.00 & 132 & 40.00 & 132 & 96.17 &   0 & 0 & 0 \\
\bottomrule
\end{tabular}}
\end{table*}

Table~\ref{tab:paraphrase_absence} shows the complementary absence-side protocol. Here the original verifier partially generalizes: it already covers ``No evidence of $X$'' and ``Clear of $X$'' in this construction, but fails on ``Absence of $X$'' and ``$X$ is not present.'' Across the eight non-canonical absence runs, B0 still shows measurable contradiction burden in all of them, the original literal-surface verifier retains nonzero contradictions in four runs, and Ext-M1 removes the measurable contradiction subset in all eight. Taken together with the presence-side results, this yields a clean rebuttal pattern: the benchmarked failure is broader than a single literal template, the original verifier is narrower than the underlying phenomenon, and a still-deterministic paraphrase-aware matcher restores coverage without regressions.

\begin{table*}[t]
\caption{Report-style absence negation-paraphrase stress test on the 507-record CheXpert absence protocol. Counts are absence-side negation contradictions. Orig-M1 is the original literal-surface verifier; Ext-M1 is the paraphrase-aware deterministic extension.}
\label{tab:paraphrase_absence}
\centering
\scriptsize
\resizebox{\textwidth}{!}{%
\begin{tabular}{llrrrrrrrrrr}
\toprule
Model & Variant & $n$ & B0 acc & B0 contr. & Orig-M1 acc & Orig-M1 contr. & Ext-M1 acc & Ext-M1 contr. & Orig-M1 worsened & Ext-M1 worsened \\
\midrule
MedGemma-4B-it & canonical\_no   & 507 & 23.08 & 356 & 93.29 &   0 & 93.29 &   0 & 0 & 0 \\
MedGemma-4B-it & absence\_of     & 507 & 18.34 & 379 & 18.34 & 379 & 93.10 &   0 & 0 & 0 \\
MedGemma-4B-it & not\_present    & 507 & 22.68 & 349 & 22.68 & 349 & 91.52 &   0 & 0 & 0 \\
MedGemma-4B-it & no\_evidence\_of& 507 & 28.21 & 326 & 92.50 &   0 & 92.50 &   0 & 0 & 0 \\
MedGemma-4B-it & clear\_of       & 507 & 41.42 & 221 & 85.01 &   0 & 85.01 &   0 & 0 & 0 \\
Qwen2.5-VL-7B-Instruct & canonical\_no    & 507 &  0.20 & 505 & 99.80 &   0 & 99.80 &   0 & 0 & 0 \\
Qwen2.5-VL-7B-Instruct & absence\_of      & 507 &  0.00 & 507 &  0.00 & 507 & 100.00 &   0 & 0 & 0 \\
Qwen2.5-VL-7B-Instruct & not\_present     & 507 &  0.00 & 505 &  0.00 & 505 & 99.61 &   0 & 0 & 0 \\
Qwen2.5-VL-7B-Instruct & no\_evidence\_of & 507 &  0.20 & 506 & 100.00 &   0 & 100.00 &   0 & 0 & 0 \\
Qwen2.5-VL-7B-Instruct & clear\_of        & 507 & 15.19 & 377 & 89.55 &   0 & 89.55 &   0 & 0 & 0 \\
\bottomrule
\end{tabular}}
\end{table*}

Across the 16 non-canonical runs in Tables~\ref{tab:paraphrase_presence} and~\ref{tab:paraphrase_absence}, B0 retains measurable polarity-confused predictions in all 16, showing that the failure is not confined to the literal ``No $X$'' string. The original literal-surface verifier loses coverage on 12 of the 16 runs, exactly where the option form falls outside its audited trigger set. By contrast, Ext-M1 restores coverage on all 16 and introduces no worsened cases; the canonical ``No $X$'' regression checks also match the original reported numbers exactly. The strongest interpretation is therefore narrow but robust: CXR-ContraBench is measuring a broader polarity-sensitivity failure, while the original main-text verifier is a deliberately conservative literal-surface instantiation of a more general deterministic matching principle.

In summary, this section shows that the measured failure is not restricted to the literal ``No $X$'' template. The failure persists under broader non-canonical negation wording, the original literal-surface verifier loses coverage on many such variants, and the extended deterministic matcher restores the measurable subset without observed regressions on this experiment.

\section{Responsible Release, Licenses, and Broader Impact}
\label{app:licenses}

\paragraph{Responsible release.}
We provide a submission-time anonymous release of the protocol JSON files, benchmark builders, evaluation scripts, deterministic verification logic, and aggregate metric utilities at \url{https://anonymous.4open.science/r/cxr-contrabench}. The release does not bundle raw images; all raw data remain subject to the licenses and access controls of the original datasets. This design minimizes redistribution risk while preserving reproducibility of the benchmark logic and measured claims.

\paragraph{Existing assets and licenses.}
ReXVQA is distributed under a non-commercial data access and use agreement~\citep{rexvqa}. VQA-RAD is released under CC0~1.0~\citep{vqarad}. CheXStruct, used through the CXReasonBench release, is distributed under CC~BY~4.0~\citep{cxreasonbench}. CheXpert and OpenI retain their original dataset terms~\citep{chexpert,openi}. MedGemma is provided under the Health AI Developer Foundations license~\citep{medgemma}. Our derived benchmark assets are designed to reference these sources rather than supersede them.

\paragraph{Broader impact.}
The primary positive contribution is making negated-option attraction measurable in a clinically sensitive setting where the failure has direct diagnostic consequences. A model that selects ``No pleural effusion'' when asked which finding is present emits the opposite clinical statement. This is a silent error that aggregate accuracy can hide. The main risk is misinterpretation: benchmark gains from \method{} should not be read as proof of clinical readiness. CXR-ContraBench measures one well-defined failure mode with scale-confirmed prevalence; end-to-end clinical validation requires additional prospective testing beyond the benchmark's scope, and \method{} should be understood as a diagnostic mitigation rather than a deployment guarantee.
\end{document}